\def\eqref#1{equation~\ref{#1}}
\def\1{\bm{1}}
\def\vtheta{{\bm{\theta}}}
\def\va{{\bm{a}}}
\def\vh{{\bm{h}}}
\def\vw{{\bm{w}}}
\def\vx{{\bm{x}}}
\def\vz{{\bm{z}}}
\def\mI{{\bm{I}}}
\def\mW{{\bm{W}}}
\def\mX{{\bm{X}}}
\def\mZ{{\bm{Z}}}
\DeclareMathAlphabet{\mathsfit}{\encodingdefault}{\sfdefault}{m}{sl}
\SetMathAlphabet{\mathsfit}{bold}{\encodingdefault}{\sfdefault}{bx}{n}
\def\gL{{\mathcal{L}}}
\def\gN{{\mathcal{N}}}
\def\sR{{\mathbb{R}}}
\newcommand{\ruizhi}[1]{\textcolor{black}{#1}}
\newcommand{\bo}[1]{\textcolor{black}{#1}}
\newtheorem{proposition}{Proposition}
\title{Modeling Continuous Stochastic Processes with Dynamic Normalizing Flows}
\author{%
  Ruizhi Deng$^{1, 2}$\thanks{Work developed during an internship at Borealis AI. Correspond to wsdmdeng@gmail.com.} ~~~~Bo Chang$^1$~~~~Marcus A. Brubaker$^{1, 3, 4}$~~~~Greg Mori$^{1, 2}$~~~~Andreas M. Lehrmann$^{1}$\\
  $^1$Borealis AI~~~~$^2$Simon Fraser University~~~~$^3$York University~~~~$^4$Vector Institute\\
}
\begin{document}

\maketitle

\begin{abstract}
Normalizing flows transform a simple base distribution into a complex target distribution and have proved to be powerful models for data generation and density estimation. In this work, we propose a novel type of normalizing flow driven by a differential deformation of the Wiener process.
As a result, we obtain a rich time series model whose observable process 
inherits many of the appealing properties of its base process, such as efficient computation of likelihoods and marginals. Furthermore, our continuous treatment provides a natural framework for irregular time series with an independent arrival process, including straightforward interpolation. We illustrate the desirable properties of the proposed model on popular stochastic processes and demonstrate its superior flexibility to variational RNN and latent ODE baselines in a series of experiments on synthetic and real-world data.
\end{abstract}

\section{Introduction}
\label{submission}

Expressive models for sequential data form the statistical basis for downstream tasks in a wide range of domains, including computer vision, robotics, and finance. Recent advances in deep generative architectures, especially the concept of reversibility, have led to tremendous progress in this area and created a new perspective on many of the long-standing  limitations that are typical in traditional approaches based on structured decompositions (e.g., state-space models). 

We argue that the power of a time series model depends on its properties in the following areas: 
(1 -- Resolution) Common time series models are discrete with respect to time. As a result, they make the implicit assumption of a uniformly spaced temporal grid, which precludes their application from asynchronous tasks with a separate arrival process. 
(2 -- Structural assumptions) The expressiveness of a temporal model is determined by the dependencies and shapes of its variables. In particular, the topological structure should be rich enough to capture the dynamics of the underlying process but sparse enough to allow for robust learning and efficient inference. 
(3 -- Generation) A good time series model must be able to generate unbiased samples from the true underlying process in an efficient way. 
(4 -- Inference) Given a trained model, it should support standard inference tasks, such as interpolation, forecasting, and likelihood calculation.

Recently, deep generative modeling has enabled vastly increased flexibility while keeping generation and inference tractable, owing to novel techniques like amortized variational inference~\cite{kingma2014auto,chung2015recurrent}, reversible generative models~\cite{rezende2015variational,kingma2018glow}, and networks based on differential equations~\cite{chen2018neural,li2020scalable}.

In this work, we approach the modeling of continuous and irregular time series with a \emph{reversible generative model for stochastic processes}. Our approach builds upon ideas from normalizing flows; however, instead of a static base distribution, we transform a dynamic base process into an observable one. In particular, we introduce the \emph{continuous-time flow process (CTFP)}, a novel type of generative model 
\begin{wrapfigure}{r}{0.45\linewidth}
 \centering
 \includegraphics[width=0.9\linewidth]{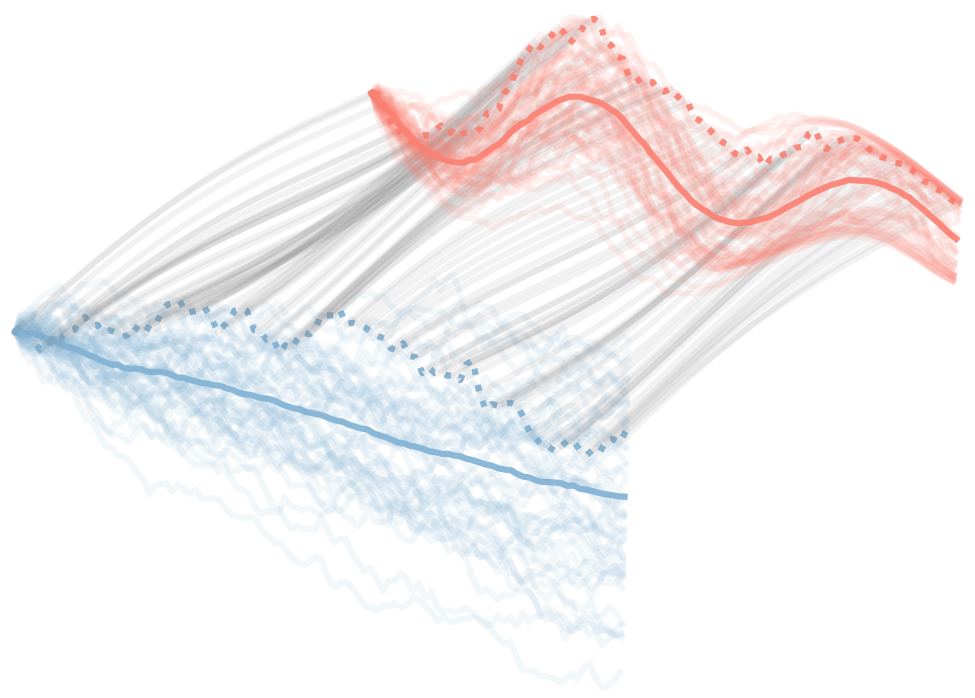}
 \caption{\small {\bf Overview.} Wiener processes are continuous stochastic processes with appealing properties but limited flexibility. We propose to learn a complex observed process (red) through a differential deformation (grey) of the base Wiener process (blue), thereby preserving the advantages of the base process.}
 \label{fig:page1}
 \vspace{-15pt}
\end{wrapfigure}
that decodes the base continuous Wiener process into a complex observable process using a dynamic instance of normalizing flows.
The resulting observable process is thus continuous in time. 
In addition to the appealing properties of static normalizing flows (e.g., efficient sampling and exact likelihood), this also enables a series of inference tasks that are typically unattainable in time series models with complex dynamics, such as interpolation and extrapolation at arbitrary timestamps. 
Furthermore, to overcome the simple covariance structure of the Wiener process, we augment the reversible mapping with latent variables and optimize this latent CTFP variant using variational optimization. Our approach is illustrated in Figure~\ref{fig:page1}.

\textbf{Contributions.} 
In summary, we propose the \emph{continuous-time flow process (CTFP)}, a novel generative model for continuous stochastic processes. 
It has the following appealing properties:
(1) it induces flexible and consistent joint distributions on arbitrary and irregular time grids, with easy-to-compute density and an efficient sampling procedure;
(2) the stochastic process generated by CTFP is guaranteed to have continuous sample paths, making it a natural fit for data with continuously-changing dynamics;
(3) CTFP can perform interpolation and extrapolation conditioned on given observations.
We validate our model and its latent variant on various stochastic processes and real-world datasets and show superior performance to state-of-the-art methods, including the variational recurrent neural network (VRNN)~\citep{chung2015recurrent} and latent ordinary differential equation (latent ODE)~\citep{rubanova2019latent}.
\section{Related Work}
\label{sec:related}
The following sections discuss the relevant literature on statistical models for sequential data and put it in context with the proposed approach.

\vspace{-0.3cm}
\paragraph{Early Work.} Among the most popular traditional time series models are latent variable models following the state-space equations~\citep{durbin12}, including the well-known variants with discrete and linear state-space~\citep{baum66,kalman60}. In the non-linear case, exact inference is typically intractable
and we need to resort to approximate techniques~\citep{julier97,ito00,cappe05,cappe07,sarkka07}. Our CTFP can be viewed as a form of a continuous-time extended Kalman filter where the nonlinear observation process is noiseless and invertible and the temporal dynamics are a Wiener process. The final result, however, is more expressive than a Wiener process but retains some of its appealing properties like closed-form likelihood, interpolation, and extrapolation. Tree-based variants of non-linear Markov models have been proposed in~\citep{lehrmann14}. 
An augmentation with switching states increases the expressiveness of state-space models but introduces additional challenges for learning~\citep{fox08} and inference~\citep{barber12}. 
Marginalization over an expansion of the state-space equations in terms of non-linear basis functions extends classical Gaussian processes~\citep{rasmussen06} to Gaussian process dynamical models~\citep{wang08}.

\vspace{-0.3cm}
\paragraph{Variational Sequence Models.}

Following its success on image data, many works extended the variational autoencoder (VAE)~\citep{kingma2014auto} to sequential data~\citep{bowman2015generating,chung2015recurrent,fraccaro2016sequential,luo2018neural}.
While RNN-based variational sequence models~\citep{chung2015recurrent, bowman2015generating} can model distributions over irregular timestamps, those timestamps have to be discrete and thus the models lack the notion of continuity.
As a result, they are not suitable for modeling sequential data that have continuous underlying dynamics. Furthermore, it is not straightforward to perform interpolation at arbitrary timestamps using those models.

Latent ODEs~\citep{rubanova2019latent} use an ODE-RNN as encoder and propagate a latent variable along a time interval using a neural ODE. 
This formulation ensures that the latent trajectory is continuous in time. %
However, decoding of the latent variables to observations is done at each time step independently. As a result, there is no guarantee that sample paths are continuous, which causes problems similar to the ones observed in variational sequence models.
Neural stochastic differential equations (neural SDEs)~\citep{li2020scalable} replace the deterministic latent trajectory of a latent ODE with a latent stochastic process but do also not generate continuous sample paths.

Recently, \citet{qin2019recurrent} proposed the recurrent neural process model. However, members of the neural process family~\cite{kim2018attentive,garnelo2018conditional,garnelo2018neural,singh2019sequential} only model the conditional distribution of data given observations and are not generic generative models.

\vspace{-0.3cm}
\paragraph{Normalizing Flows in Time Series Modeling.} Multiple recent works~\cite{kumar2019videoflow,mehrasa2019point,shchur2020intensity} apply reversible generative models to sequential data and show promise at capturing complex distributions. \citet{mehrasa2019point} and \citet{shchur2020intensity} use normalizing flows to model the distribution of inter-arrival time between events in temporal point processes. \citet{kumar2019videoflow} generate video frames using conditional normalizing flows. 
However, these models only use normalizing flows to model probability distributions in real space.
In contrast, our model extends the domain of normalizing flows from distributions in real space to continuous-time stochastic processes.

\section{Background}
Our model is built upon the study of stochastic processes and recent advances in normalizing flow research. The following sections introduce the necessary background in these areas.
\subsection{Stochastic Processes}
\label{sec:stochastic}
A stochastic process can be defined as a collection of random variables that are indexed by time.
An example of a continuous stochastic process is the \emph{Wiener process}.
The $d$-dimensional Wiener process $\mW_\tau$ can be characterized by the following properties:
(1) $\mW_0 = 0$;
(2) $\mW_t - \mW_s \sim \gN(0, (t-s)\mI_d)$ for $s \le t$, and $\mW_t - \mW_s$ is independent of past values of $\mW_{s'}$ for all $s' \le s$. 
The joint density of $(\mW_{\tau_1}, \ldots, \mW_{\tau_n})$ can be written as the product of the conditional densities:
$p_{(\mW_{\tau_1}, \ldots, \mW_{\tau_n})}
(\vw_{\tau_1}, \ldots, \vw_{\tau_n}) 
= 
\prod_{i=1}^n
p_{ \mW_{\tau_i} | \mW_{\tau_{i-1}} }
(\vw_{\tau_i} | \vw_{\tau_{i-1}})$
for $0 \le \tau_1 < \cdots < \tau_n \le T$.

The conditional distribution of $p_{ \mW_t | \mW_s }$, for $s < t$, is multivariate Gaussian; its conditional density is
\begin{equation}
\label{eq:wiener-cond-density}
p_{ \mW_t | \mW_s } (\vw_t | \vw_s)
=
\gN(\vw_t; \vw_s, (t-s) \mI_d),
\end{equation}
where $\mI_d$ is a $d$-dimensional identity matrix.
This equation also provides a way to sample from $(\mW_{\tau_1}, \ldots, \mW_{\tau_n})$.
Furthermore, given $\mW_{t_1}=\vw_{t_1}$ and $\mW_{t_2}=\vw_{t_2}$, the conditional distribution of $\mW_t$ for $t_1 \le t \le t_2$ is also Gaussian:
\begin{equation}
p_{ \mW_t | \mW_{t_1}, \mW_{t_2} }
(\vw_t | \vw_{t_1}, \vw_{t_2})
=
\gN \bigg(
\vw_t; 
\vw_{t_1} + \frac{t - t_1}{t_2 - t_1} (\vw_{t_2} - \vw_{t_1}), 
\frac{(t_2 - t) (t - t_1)}{t_2 - t_1} \mI_d
\bigg).
\label{eq:brownian-bridge-density}
\end{equation}
This is known as the Brownian bridge.
\bo{An important property of the Wiener process is that the sample paths are continuous in time with probability one. This property allows our models to generate continuous sample paths and perform interpolation and extrapolation tasks.}

\subsection{Normalizing Flows}
\emph{Normalizing flows}~\citep{rezende2015variational,dinh2014nice,kingma2016improved,dinh2017density,papamakarios2017masked,kingma2018glow,behrmann2019invertible,chen2019residual,kobyzev2019normalizing,papamakarios2019normalizing} are reversible generative models that allow both density estimation and sampling.
If our interest is to estimate the density function $p_\mX$ of a random vector $\mX \in \sR^d$, then normalizing flows assume $\mX = f(\mZ)$, where $f: \sR^d \to \sR^d$ is a bijective function, and $\mZ \in \sR^d$ is a random vector with a simple density function $p_\mZ$. 
The probability density function can be evaluated using the change of variables formula:
\begin{equation}
\label{eq: change_of_variables}
\log p_\mX(\vx) = \log p_\mZ(g(\vx))
+
\log \left|\det 
\left( 
\frac{\partial g}{\partial \vx} 
\right) 
\right|,
\end{equation}
where we denote the inverse of $f$ by $g$ and ${\partial g} / {\partial \vx}$ is the Jacobian matrix of $g$.
Sampling from $p_\mX$ can be done by first drawing a sample from the simple distribution $\vz \sim p_\mZ$, and then apply the bijection $\vx = f(\vz)$.

\citet{chen2018neural,grathwohl2018scalable} proposed the \emph{continuous normalizing flow}, which uses the \emph{neural ordinary differential equation}~(neural ODE) to model a flexible bijective mapping.
Given $\vz = \vh(t_0)$ sampled from the base distribution $p_\mZ$, it is mapped to $\vh(t_1)$ based on the mapping defined by the ODE: 
${d \vh(t)}/{dt} = f(\vh(t), t)$.
The change in log-density is computed by the \emph{instantaneous change of variables formula}~\citep{chen2018neural}:
\begin{equation}
    \log p_\mX(\vh(t_1)) = \log p_\mZ(\vh(t_0)) - \int_{t_0}^{t_1}
    \operatorname{tr}
    \left(
    \frac{\partial f}{\partial \vh(t)}
    \right) dt.
\label{eq:instant-change-of-var}
\end{equation}

One potential disadvantage of the neural ODE model is that it preserves the topology of the input space, and there are classes of functions that cannot be represented by neural ODEs.
\citet{dupont2019augmented} proposed the augmented neural ODE (ANODE) model to address this limitation. 
Note that the original formulation of ANODE is not a generative model and it does not support the computation of likelihoods $p_\mX(\vx)$ or sampling from the target distribution $\vx \sim p_\mX$. In this work, we formulate a modified version of ANODE that can be used as a (conditional) generative model.
\section{Model}
\label{sec:model}
\bo{We define our proposed \emph{continuous-time flow process (CTFP)} in Section~\ref{sec:CTFP}. 
In Section~\ref{sec:modified_anode}, a generative variant of ANODE is presented as a component to implement CTFP. 
Since the proposed stochastic process is continuous in time, it enables interpolation and extrapolation at arbitrary time points, as described in Section~\ref{sec:inter-extrapolation}. 
Finally, richer covariance structures are enabled by the latent CTFP model presented in Section~\ref{sec:latent-CTFP}.}

\begin{figure*}
    \centering
    \begin{subfigure}[t]{0.3\textwidth}
        \centering
        \includegraphics[width=\textwidth]{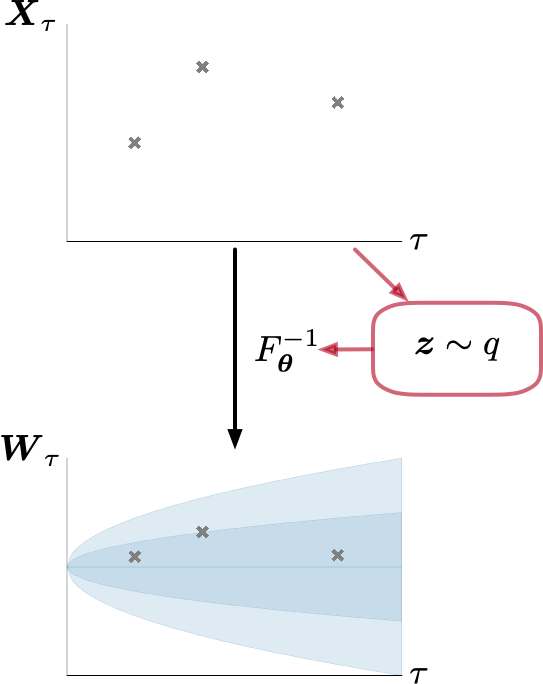}
        \caption{Likelihood calculation.}
        \label{fig:fig2_likelihood}
    \end{subfigure}
    \quad
    \begin{subfigure}[t]{0.3\textwidth}
        \centering
        \includegraphics[width=\textwidth]{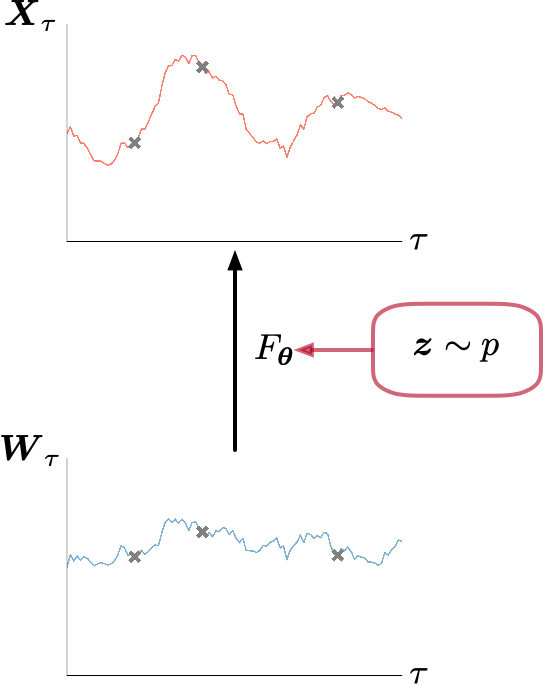}
        \caption{Sampling.}
        \label{fig:fig2_sampling}
    \end{subfigure}
    \quad
    \begin{subfigure}[t]{0.3\textwidth}
        \centering
        \includegraphics[width=\textwidth]{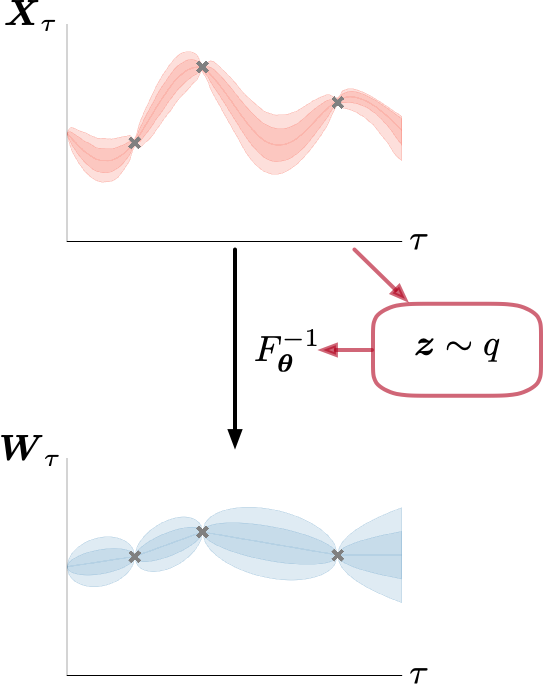}
        \caption{Interpolation and extrapolation.}
        \label{fig:fig2_interpolation}
    \end{subfigure}
    \caption{\small 
    {\bf (Latent) Continuous-Time Flow Processes (CTFPs).} ({\bf a}) Likelihood calculation.
    Given an irregular time series $\{\vx_{\tau_i}\}$, the inverse flow $F_\vtheta^{-1}$ maps the observed process to a set of Wiener points $\{\vw_{\tau_i}\}$ for which we can compute the likelihood according to Equation~\ref{eq:model-llk-logdet}. ({\bf b}) Sampling. Given a set of timestamps $\{\tau_i\}$, we sample a Wiener process %
    and use the forward flow $F_\vtheta$ to obtain a sample of the observed process. ({\bf c}) Interpolation and extrapolation. In order to compute the density at an unobserved point $\vx_\tau$, we compute the left-sided (extrapolation; Equation~\ref{eq:wiener-cond-density}) or two-sided (interpolation; Equation~\ref{eq:brownian-bridge-density}) conditional density of its Wiener point $\vw_\tau$ and adjust for the flow (Equation~\ref{eq:interpolation}). {\bf Notes:} The effect of the latent variables $\mZ$ in our latent CTFP model is indicated by red boxes. The shaded areas represent 70\% and 95\% confidence intervals.}
    \label{fig:fig2}
    \vspace{-12pt}
\end{figure*}

\subsection{Continuous-Time Flow Process}
\label{sec:CTFP}

Let 
$\{
(\vx_{\tau_i}, \tau_i)
\}_{i=1}^n,$
denote a sequence of irregularly spaced time series data.
We assume the time series to be an (incomplete) realization of a continuous stochastic process $\{\mX_\tau\}_{\tau \in [0, T]}$.
\bo{In other words, this stochastic process induces a joint distribution of $(\mX_{\tau_1}, \ldots, \mX_{\tau_n})$.}
Our goal is to model $\{\mX_\tau\}_{\tau \in [0, T]}$ such that the log-likelihood of the observations
\begin{equation}
\label{eq:model-llk-def}
\mathcal{L} 
= 
\log p_{\mX_{\tau_1}, \ldots, \mX_{\tau_n}}
(\vx_{\tau_1}, \ldots, \vx_{\tau_n})
\end{equation}
is maximized.
We define the continuous-time flow process (CTFP) $\{F_{\vtheta} (\mW_\tau; \tau)\}_{\tau \in [0, T]}$ such that 
\begin{equation}
\label{eq:model-nf}
\mX_\tau = F_{\vtheta}(\mW_\tau; \tau), \quad \forall \tau \in [0, T],     
\end{equation}
where $F_{\vtheta} (\cdot; \tau) : \mathbb{R}^d \to \mathbb{R}^d$ is an invertible mapping parametrized by the learnable parameters $\vtheta$ for every $\tau\in[0, T]$, and $\mW_\tau$ is a $d$-dimensional Wiener process.

The log-likelihood in Equation~\ref{eq:model-llk-def} can be rewritten using the change of variables formula.
Let $\vw_{\tau_i} = F_{\vtheta}^{-1}(\vx_{\tau_i}; \tau_i)$, then
\begin{equation}
\mathcal{L} 
=
\sum_{i=1}^n
\bigg[
\log p_{ \mW_{\tau_i} | \mW_{\tau_{i-1}} }
(\vw_{\tau_i} | \vw_{\tau_{i-1}})
- \log\left|\det
\frac
{\partial F_{\vtheta}(\vw_{\tau_i}; \tau_i)}
{\partial \vw_{\tau_i}}
\right|
\bigg],
\label{eq:model-llk-logdet}
\end{equation}
where $\tau_0 = 0$, $\mW_0 = 0$, and $p_{ \mW_{\tau_i} | \mW_{\tau_{i-1}} }$ is defined in Section~\ref{sec:stochastic}.
Figure~\ref{fig:fig2_likelihood} shows an example of the likelihood calculation. Sampling from a CTFP is straightforward: given the timestamps $\tau_i$, we first sample a realization of the Wiener process $\{\vw_{\tau_i}\}_{i=1}^n$, then map them to  
$\vx_{\tau_i} = F_\vtheta(\vw_{\tau_i};\tau_i)$.
Figure~\ref{fig:fig2_sampling} illustrates this procedure.

The normalizing flow $F_{\vtheta} (\cdot; \tau)$ transforms a simple base distribution induced by ${\mW_\tau}$ on an arbitrary time grid into a more complex shape in the observation space. 
It is worth noting that given a continuous realization of $\mW_\tau$, as long as $F_\vtheta(\cdot; \tau)$ is implemented as a continuous mapping,
the resulting trajectory $\vx_\tau$ is also continuous. 

\subsection{Generative ANODE}
\label{sec:modified_anode}
\bo{In principle, any normalizing flow model indexed by time $\tau$ could be used as $F_{\vtheta}(\cdot; \tau)$ in Equation~\ref{eq:model-nf}. 
We proceed with the continuous normalizing flow and ANODE, because it has a free-form Jacobian and efficient trace estimator \citep{dupont2019augmented,grathwohl2018scalable}. 
In particular, we consider the following instantiation of ANODE as a generative model:}
For any $\tau \in [0, T]$ and $\vw_\tau \in \mathbb{R}^d$, we map $\vw_\tau$ to $\vx_\tau$ by solving the following initial value problem:
\begin{equation}
  \begin{split}
    \frac
    { d } 
    { dt }
    \begin{pmatrix}
    \vh_\tau(t) 
    \\
    a_\tau(t)    
    \end{pmatrix}
    =
    \begin{pmatrix}
    f_\vtheta ( \vh_\tau(t), a_\tau(t), t)
    \\
    g_\vtheta ( a_\tau(t), t)
    \end{pmatrix},
    \quad
    \begin{pmatrix}
    \vh_\tau(t_0) 
    \\
    a_\tau(t_0)    
    \end{pmatrix}
    =
    \begin{pmatrix}
    \vw_\tau 
    \\
    \tau   
    \end{pmatrix},
  \end{split}
\label{eq:ivp}
\end{equation}
where $\vh_\tau(t) \in \sR^d, t \in [t_0, t_1]$, $f_\vtheta: \sR^d \times \sR \times [t_0, t_1] \to \sR^d$, and $g_\vtheta: \sR \times [t_0, t_1] \to \sR$.
Then $F_\vtheta$ in Equation~\ref{eq:model-nf} is defined as the solution of $\vh_\tau(t)$ at $t=t_1$:
\begin{align}
  F_\vtheta (\vw_\tau; \tau) 
  := \vh_\tau(t_1) 
  = \vh_\tau(t_0) + \int_{t_0}^{t_1} 
  f_\vtheta \left( \vh_\tau(t), a_\tau(t), t \right) dt.
\label{eq:eq9}
\end{align}
Note that the index $t$ represents the independent variable in the initial value problem and should not be confused with $\tau$, the timestamp of the observation.

Using Equation~\ref{eq:instant-change-of-var}, the log-likelihood $\gL$ can be calculated as follows:
\begin{multline}
\gL
=
\sum_{i=1}^n
\bigg[
\log p_{ \mW_{\tau_i} | \mW_{\tau_{i-1}} }
\left(
\vh_{\tau_i}(t_0) | \vh_{\tau_{i-1}}(t_0)
\right)
- 
\int_{t_0}^{t_1}
\operatorname{tr}
\left(
\frac
{\partial f_\vtheta ( \vh_{\tau_i}(t), a_{\tau_i}(t), t)}
{\partial \vh_{\tau_i}(t)}
\right)
dt
\bigg],
\label{eq:model-llk-instanteous}
\end{multline}
where $\vh_{\tau_i}(t_0)$ is obtained by solving the ODE in Equation~\ref{eq:ivp} backwards from $t=t_1$ to $t=t_0$, and the trace of the Jacobian can be estimated by Hutchinson’s trace estimator~\citep{hutchinson1990stochastic,grathwohl2018scalable}.

\subsection{Interpolation and Extrapolation with CTFP}
\label{sec:inter-extrapolation}
Time-indexed normalizing flows and the Brownian bridge allow us to define conditional distributions on arbitrary timestamps. They also permit the CTFP model to perform interpolation and extrapolation given partial observations, which are of great importance in time series modeling.

Interpolation means that we can model the conditional distribution $p_{\mX_\tau | \mX_{\tau_i}, \mX_{\tau_{i+1}}} (\vx_\tau | \vx_{\tau_i}, \vx_{\tau_{i+1}})$ for all $\tau \in [\tau_i, \tau_{i+1}]$ and $i=1,\ldots, n-1$.
This can be done by mapping the values $\vx_\tau$, $\vx_{\tau_i}$ and $\vx_{\tau_{i+1}}$ to $\vw_\tau$, $\vw_{\tau_i}$ and $\vw_{\tau_{i+1}}$, respectively. 
After that, Equation~\ref{eq:brownian-bridge-density} can be applied to obtain the conditional density of 
$p_{ \mW_\tau | \mW_{\tau_{i}}, \mW_{\tau_{i+1}} }
(\vw_\tau | \vw_{\tau_{i}}, \vw_{\tau_{i+1}})$.
Finally, we have
\begin{equation}
\log p_{\mX_\tau | \mX_{\tau_i}, \mX_{\tau_{i+1}}} 
(\vx_\tau | \vx_{\tau_i}, \vx_{\tau_{i+1}})
=
\log p_{ \mW_\tau | \mW_{\tau_i}, \mW_{\tau_{i+1}} }
(\vw_\tau | \vw_{\tau_i}, \vw_{\tau_{i+1}})
-
\log 
\left|
\det
\frac{\partial \vx_\tau}{\partial \vw_\tau}
\right|.
\label{eq:interpolation}
\end{equation}
Extrapolation can be done in a similar fashion using Equation~\ref{eq:wiener-cond-density}. 
This allows the model to predict continuous trajectories into the future, given past observations.
Figure~\ref{fig:fig2_interpolation} shows a visualization of interpolation and extrapolation using CTFP.

\subsection{Latent Continuous-Time Flow Process}
\label{sec:latent-CTFP}

The CTFP model inherits the Markov property from the Wiener process, which is a strong assumption and limits its ability to model stochastic processes with complex temporal dependencies. 
In order to enhance the expressive power of the CTFP model, we augment it with a latent variable $\mZ \in \sR^m$, whose prior distribution is an isotropic Gaussian $p_\mZ(\vz) = \mathcal{N}(\vz; 0, \mI_m)$. 
As a result, the data distribution can be approximated by a diverse collection of CTFP models conditioned on sampled latent variables $\vz$.

The generative model in Equation~\ref{eq:model-nf} is augmented to 
$
\mX_\tau = F_\vtheta 
(\mW_\tau; \mZ, \tau), \forall \tau \in [0, T],
\label{eq:model-latent-nf}
$
which induces the conditional distribution $\mX_{\tau_1}, \ldots, \mX_{\tau_n} | \mZ$. Similar to the initial value problem in Equation~\ref{eq:ivp}, we define $F_\vtheta (\vw_\tau; \vz, \tau) = \vh_\tau(t_1)$, where
\begin{equation}
  \begin{split}
    \frac
    { d } 
    { dt }
    \begin{pmatrix}
    \vh_\tau(t) 
    \\
    \va_\tau(t)    
    \end{pmatrix}
    =
    \begin{pmatrix}
    f_\vtheta ( \vh_\tau(t), \va_\tau(t), t)
    \\
    g_\vtheta ( \va_\tau(t), t)
    \end{pmatrix},
    \begin{pmatrix}
    \vh_\tau(t_0) 
    \\
    \va_\tau(t_0)    
    \end{pmatrix}
    =
    \begin{pmatrix}
    \vw_\tau 
    \\
    (\vz, \tau)^\top
    \end{pmatrix}.
  \end{split}
\label{eq:ivp-augmented}
\end{equation}
Depending on the sample of the latent variable $\vz$, the CTFP model has different gradient fields and thus different output distributions.

For simplicity of notation, the subscripts of density functions are omitted from now on.
For the augmented generative model, the log-likelihood becomes
$
\mathcal{L} 
=
\log \int_{\sR^m}
p(\vx_{\tau_1}, \ldots, \vx_{\tau_n} | \vz) p(\vz) \ d\vz,
$
which is intractable to evaluate.
Following the variational autoencoder approach~\citep{kingma2014auto},
we introduce an approximate posterior distribution of $\mZ | \mX_{\tau_1}, \ldots, \mX_{\tau_n}$, denoted by $q(\vz|\vx_{\tau_1}, \ldots, \vx_{\tau_n})$.
The implementation of the approximate posterior distribution is an ODE-RNN encoder~\citep{rubanova2019latent}.
With the approximate posterior distribution, we can derive an importance-weighted autoencoder (IWAE)~\citep{burda2016importance} lower bound of the log-likelihood on the right-hand side of the inequality:
\begin{align}
\gL 
&
=
\log 
\mathbb{E}_{\vz \sim q}
\left[
\frac
{p(\vx_{\tau_1}, \ldots, \vx_{\tau_n} | \vz) p(\vz)}
{q(\vz | \vx_{\tau_1}, \ldots, \vx_{\tau_n})}
\right]
\nonumber\\
& 
\ge
\mathbb{E}_{\vz_1,\dots,\vz_K \sim q}
\left[
\log
\left(
\frac{1}{K}
\sum_{k=1}^K
\frac
{p(\vx_{\tau_1}, \ldots, \vx_{\tau_n} | \vz_k) p(\vz_k)}
{q(\vz_k | \vx_{\tau_1}, \ldots, \vx_{\tau_n})}
\right)
\right]
=:
\gL_{\mathrm{IWAE}}
,
\label{eq:iwae}
\end{align}
where $K$ is the number of samples from the approximate posterior distribution.

\section{Experiments}
\label{sec:exp}
In this section, we apply our models on synthetic data generated from common continuous-time stochastic processes and complex real-world datasets.
The proposed CTFP and latent CTFP models are compared against two baseline models: latent ODEs~\cite{rubanova2019latent} and variational RNNs (VRNNs) \citep{chung2015recurrent}. 
The latent ODE model with the ODE-RNN encoder is designed specifically to model time series data with irregular observation times. 
VRNN is a popular variational filtering model that demonstrated superior performance on structured sequential data. 

For VRNNs, we append the time gap between two observations as an additional input to the neural network. %
Both latent CTFP and latent ODE models use ODE-RNN~\cite{rubanova2019latent} as the inference network; GRU~\cite{cho2014learning} is used as the RNN cell in latent CTFP, latent ODE, and VRNN models. 
All three latent variable models have the same latent dimension and GRU hidden state dimension. Please see the supplementary materials for details about our experimental setup and model implementations.

\subsection{Synthetic Datasets}
We simulate three irregularly-sampled time series datasets; all of them are univariate.
\textbf{Geometric Brownian motion} (GBM) is a continuous-time stochastic process widely used in mathematical finance. It satisfies the following stochastic differential equation:
$
    dX_\tau = \mu X_\tau d \tau + \sigma X_\tau d W_\tau,
$
where $\mu$ and $\sigma$ are the drift term and variance term, respectively. 
The timestamps of the observations are in the range between 0 and $T=30$ and are sampled from a homogeneous Poisson point process with an intensity of $\lambda_{\mathrm{train}}=2$. 
To further evaluate the model's capacity to capture the dynamics of GBM, we test the model with observation time-steps sampled from Poisson point processes with intensities of $\lambda_{\mathrm{test}} = 2$ and $\lambda_{\mathrm{test}} = 20$.
\textbf{Ornstein--Uhlenbeck process} (OU Process) is another type of widely used continuous-time stochastic process. The OU process satisfies the following stochastic differential equation:
$
    dX_\tau = \theta(\mu - X_\tau)d\tau + \sigma d W_\tau.
$
We use the same set of observation intensities as in our GBM experiments to sample observation timestamps in the training and test sets.
\textbf{Mixture of OUs.}
To demonstrate the latent CTFP's capability to model sequences sampled from different continuous-time stochastic processes, we train our models on a dataset generated by mixing the sequences sampled from two different OU processes with different values of $\theta, \mu, \sigma$, and different observation intensities.
We defer the details of the parameters of the synthetic dataset to the supplementary materials.

\textbf{Results.}
The results are presented in Table~\ref{tab:synthetic}. 
We report the exact negative log-likelihood (NLL) per observation for CTFP.
For latent ODE, latent CTFP, and VRNN, we report the (upper bound of) NLL estimated by the IWAE bound~\cite{burda2016importance} in Equation~\ref{eq:iwae}, using $K=25$ samples of latent variables.
We also show the NLL of the test set computed with the ground truth density function.
 
The results on the test set sampled from the GBM indicate that the CTFP model can recover the true data generation process as the NLL estimated by CTFP is close to the ground truth. 
In contrast, latent ODE and VRNN models fail to recover the true data distribution. 
On the M-OU dataset, the latent CTFP models show better performance than the other models. Moreover, latent CTFP outperforms CTFP by 0.016 nats, indicating its ability to leverage the latent variables.

\begin{table}
    \caption{\small {\bf Quantitative Evaluation (Synthetic Data)}. We show test negative log-likelihood on three synthetic stochastic processes across different models. Below each process, we indicate the intensity of the Poisson point process from which the timestamps for the test sequences were sampled. ``Ground Truth'' refers to the closed-form negative log-likelihood of the true underlying data generation process. [GBM: geometric Brownian motion; OU: Ornstein–Uhlenbeck process; M-OU: mixture of OUs.]}   
    \label{tab:synthetic}
    \centering
    \small
    \scalebox{1.0}{
    \begin{tabular}{l@{\extracolsep{3pt}}rrrrr@{}}
        \toprule
         \multirow{2}{*}{Model}
         &\multicolumn{2}{c}{GBM}&\multicolumn{2}{c}{OU}&\multicolumn{1}{c}{M-OU}\\
         \cmidrule{2-3}
         \cmidrule{4-5}
         \cmidrule{6-6}
         &$\lambda_\mathrm{test} = 2$ & $\lambda_\mathrm{test} = 20$ & $\lambda_\mathrm{test} =2$ & $\lambda_\mathrm{test} =20$ & $\lambda_\mathrm{test} = (2, 20)$ \\
        \midrule
         Latent ODE~\cite{rubanova2019latent} & 3.826 & 5.935 & 3.066 & 3.027 & 2.690 \\
         VRNN~\cite{chung2015recurrent} & 3.762 & 3.492 & \textbf{2.729} & \textbf{1.939} & 1.415 \\
         CTFP ({\bf ours}) & \textbf{3.107} & \textbf{1.929} & 2.902 & 1.941 & 1.408 \\
         Latent CTFP ({\bf ours}) & \textbf{3.107} & 1.930 & 2.902 & \textbf{1.939} & \textbf{1.392} \\
         \midrule
         Ground Truth & 3.106 & 1.928 & 2.722 & 1.888 & 1.379 \\
        \bottomrule
    \end{tabular}
    }
\end{table}

Although trained on samples with an observation intensity of $\lambda_{\mathrm{train}}=2$, CTFP can better adapt to samples with a bigger observation intensity (and thus denser time grid) of $\lambda_{\mathrm{test}}=20$. We hypothesize that the superior performance of CTFP models when $\lambda_{\mathrm{test}}=20$ is due to their capability to model continuous stochastic processes, whereas the baseline models do not have the notion of continuity.
We further corroborate this hypothesis in an ablation study where the base Wiener process is replaced with i.i.d.~Gaussian random variables, such that the base process is no longer continuous in time (see the supplementary materials).
\begin{figure*}
    \centering
    \begin{subfigure}[b]{0.3\textwidth}
        \centering
        \includegraphics[width=\textwidth]{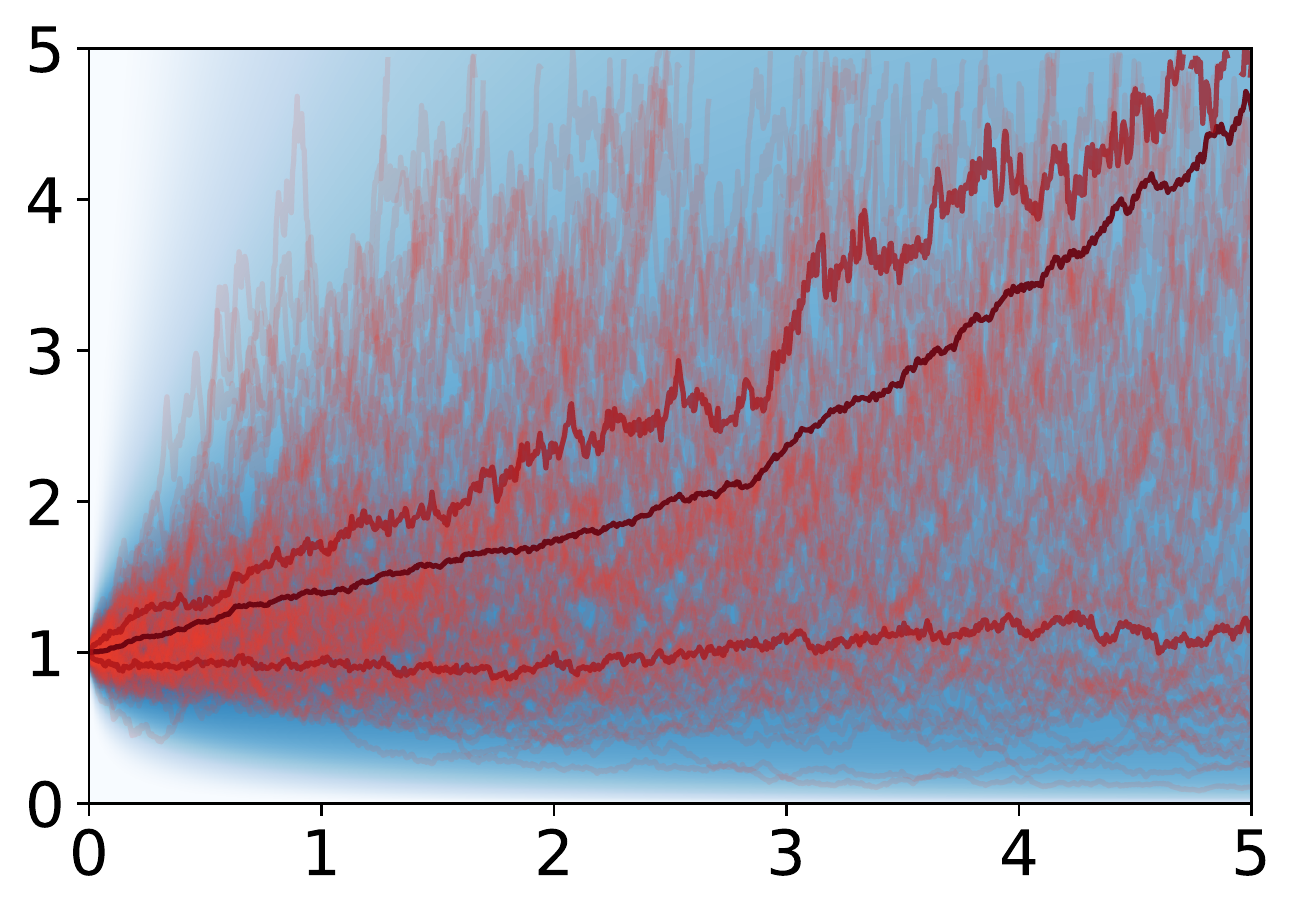}
        \\
        \includegraphics[width=\textwidth]{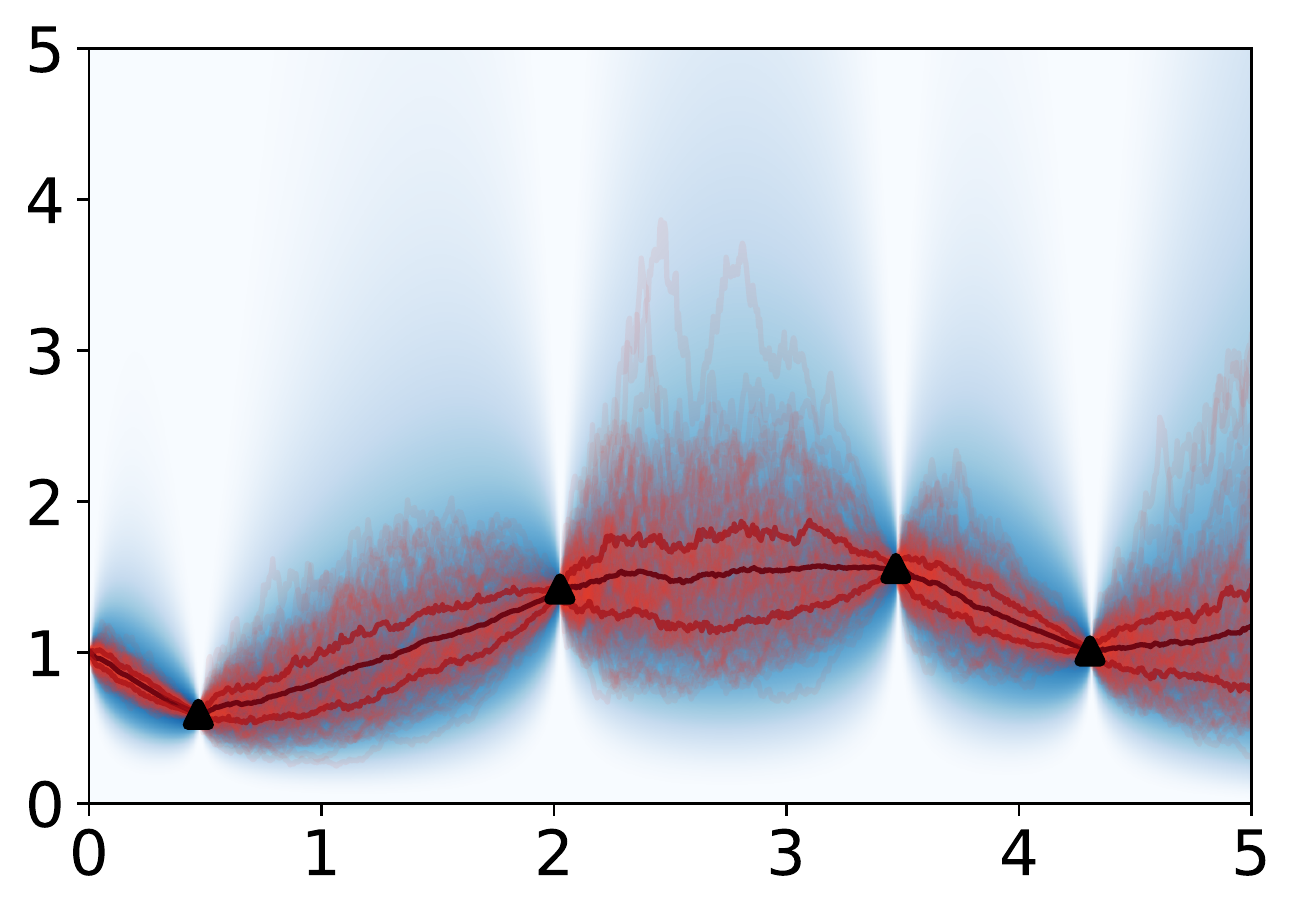}
        \caption{CTFP}
        \label{fig:fig3_ctfp}
    \end{subfigure}
    \quad
    \begin{subfigure}[b]{0.3\textwidth}
        \centering
        \includegraphics[width=\textwidth]{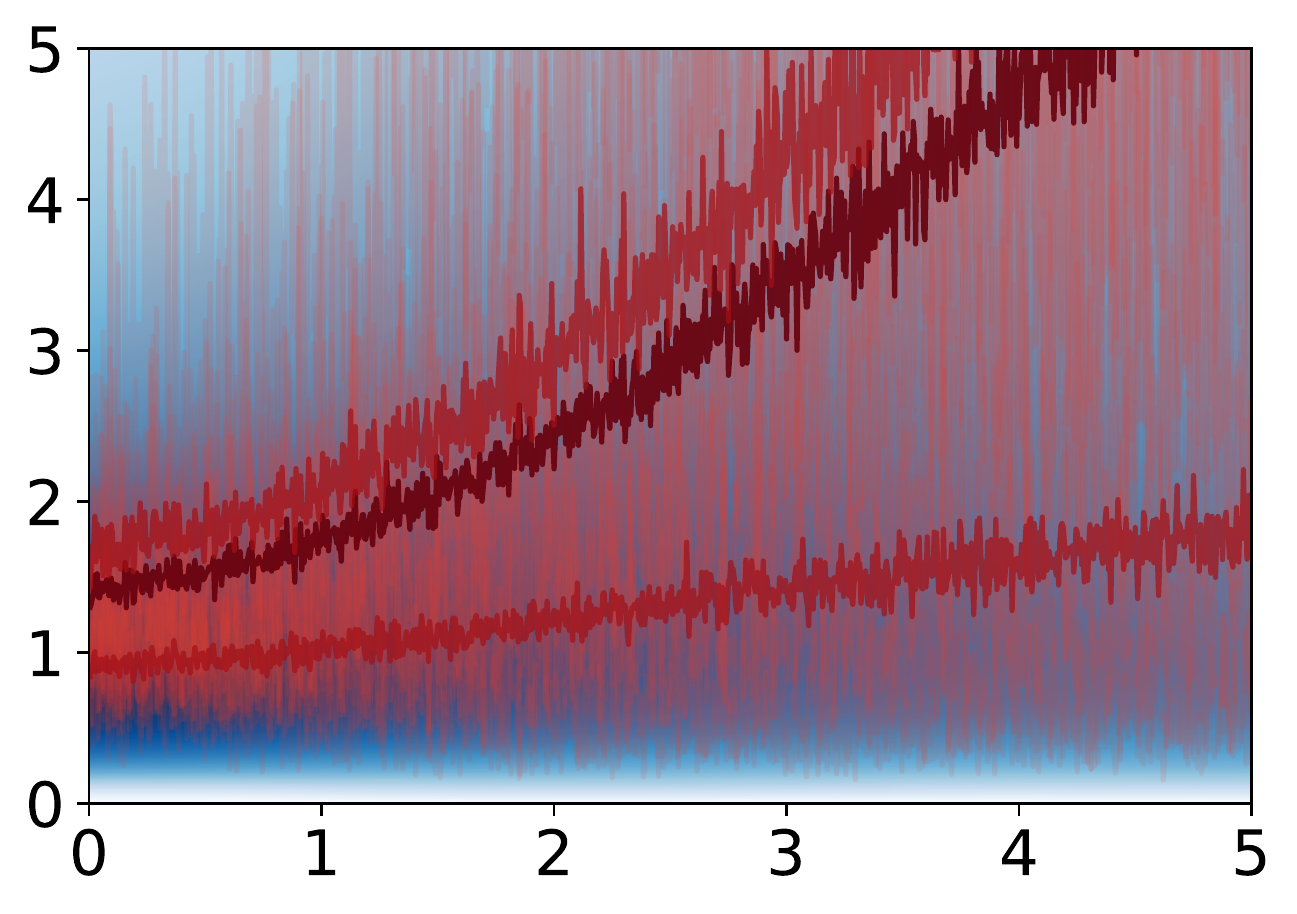}
        \\
        \includegraphics[width=\textwidth]{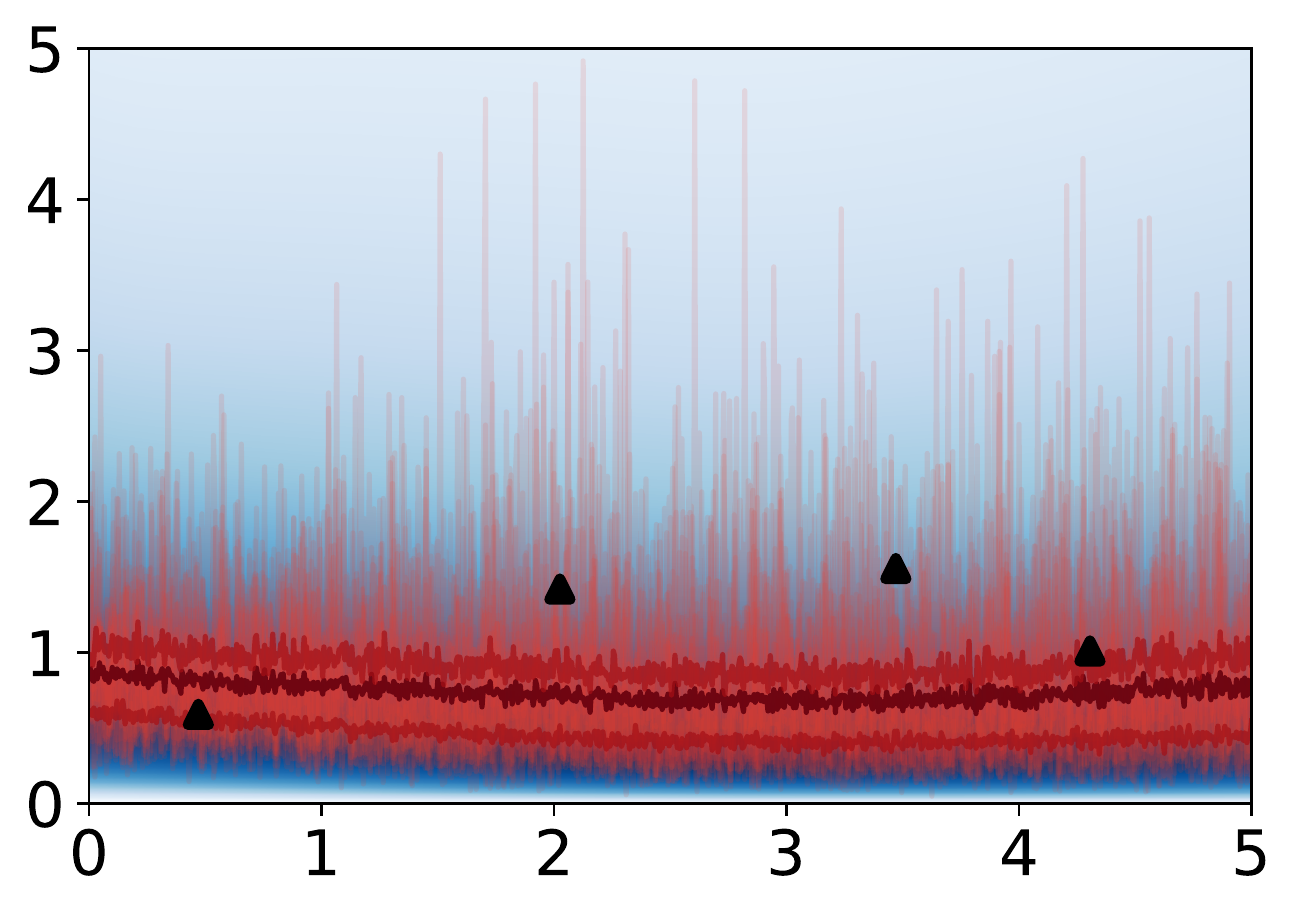}
        \caption{Latent ODE}
        \label{fig:fig3_latentODE}
    \end{subfigure}
    \quad
    \begin{subfigure}[b]{0.30\textwidth}
        \centering
        \includegraphics[width=\textwidth]{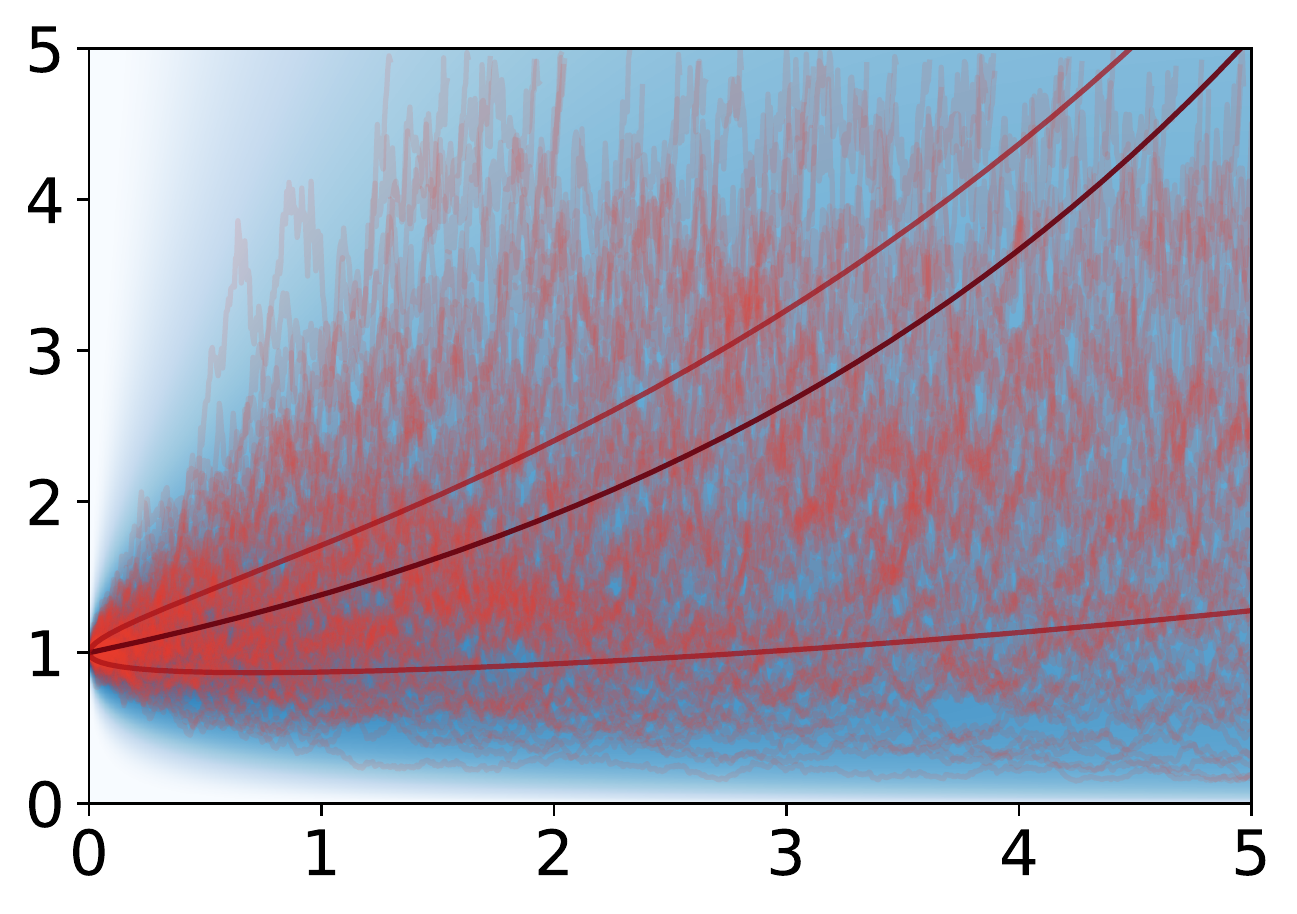}
        \\
        \includegraphics[width=\textwidth]{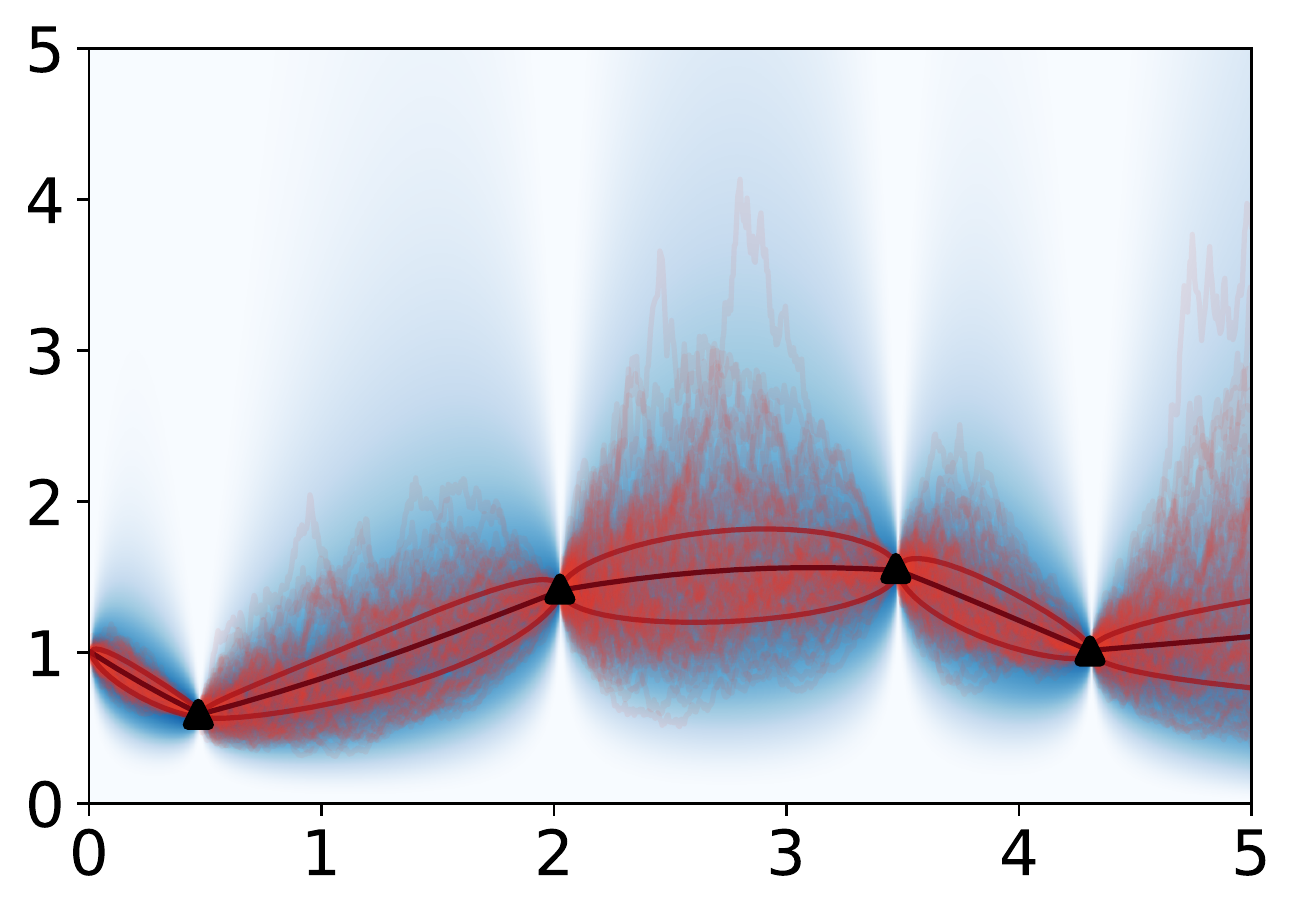}
        \caption{Ground Truth}
        \label{fig:fig3_gt}
    \end{subfigure}
    \caption{\small {\bf Comparison between CTFP and latent ODE on the GBM data.} 
    We consider the generation and interpolation tasks for (a) CTFP, (b) latent ODE, and (c) ground truth. 
    In each subfigure, the upper panel shows samples generated from the model and the lower panel shows results for interpolation. The observed points for interpolation are marked by black triangles.
    In addition to the sample trajectories (red) and the marginal density (blue), we also show the sample-based estimates (closed-form for ground truth) of the inter-quartile range (dark red) and mean (brown) of the marginal density.}
    \label{fig:fig3}
    \vspace{-10pt}
\end{figure*}

Figure~\ref{fig:fig3} provides a qualitative comparison between CTFP and latent ODE trained on the GBM data, both on the generation task (upper panels) and the interpolation task (lower panels). 
The results in the upper panels show that CTFP can generate continuous sample paths and accurately estimate the marginal mean and quantiles. In contrast, the sample paths generated by latent ODE are more volatile and discontinuous due to its lack of continuity. 
For the interpolation task, the results of CTFP are consistent with the ground truth in terms of both point estimation and uncertainty estimation.
For latent ODE on the interpolation task, Figure~\ref{fig:fig3_latentODE} shows that the latent variables from the variational posterior shift the density to the region where the observations lie. 
However, although latent ODE is capable of performing interpolation, there is no guarantee that the (reconstructed) sample paths pass through the observed points (triangular marks in Figure~\ref{fig:fig3_latentODE}),
as discussed in Section~\ref{sec:related}. In addition to these difficulties with the interpolation task, the qualitative comparison between samples further highlights the importance of our models' continuity when generating samples of continuous dynamics.

\subsection{Real-World Datasets}
\label{exp:real}
We also evaluate our models on real-world datasets with continuous and complex dynamics.
The following three datasets are considered:
\textbf{Mujoco-Hopper}~\cite{rubanova2019latent} consists of 10,000 sequences that are simulated by a ``Hopper'' model from the DeepMind Control Suite in a MuJoCo environment~\citep{deepmindcontrolsuite2018}. 
\textbf{PTB Diagnostic Database} (PTBDB)~\citep{bousseljot1995nutzung} consists of excerpts of ambulatory electrocardiography (ECG) recordings. 
Each sequence is one-dimensional and the sampling frequency of the recordings is 125 Hz. 
\textbf{Beijing Air-Quality Dataset} (BAQD)~\citep{zhang2017cautionary} is a dataset consisting of multi-year recordings of weather and air quality data across different locations in Beijing.
The variables in consideration are temperature, pressure, and wind speed, and the values are recorded once per hour. We segment the data into sequences, each covering the recordings of a whole week. Please refer to the supplementary materials for additional details about data preprocessing.

Similar to our synthetic data experiment settings, we compare the CTFP and latent CTFP models against latent ODE and VRNN. 
It is worth noting that the latent ODE model in the original work~\citep{rubanova2019latent} uses a fixed output variance and is evaluated using mean squared error (MSE); we adapt the model to our tasks with a predicted output variance (see supplementary materials). We further study the effect of using RealNVP~\citep{dinh2017density} as the invertible mapping $F_\vtheta(\cdot; \tau)$. This experiment can be regarded as an ablation study and results are presented in the supplementary materials as well. 
\begin{savenotes}
\begin{table}
\vspace{-10pt}
\begin{center}
    \caption{\small {\bf Quantitative Evaluation (Real-World Data).} We show test negative log-likelihood on Mujoco-Hopper, Beijing Air-Quality Dataset (BAQD) and PTB Diagnostic Database (PTBDB) across different models. For CTFP, the reported values are exact; for the other four models, we report IWAE bounds using $K=125$ samples. Latent CTFP-ET stands for a latent CTFP model trained and evaluated with the exact Jacobian trace. Lower values correspond to better performance. Standard deviations are based on 5 independent runs.}
    \label{tab:read-data} 
    \centering
    \small
    \scalebox{1.0}{\begin{tabular}{lrrr}
        \toprule
         Model & Mujoco-Hopper~\cite{rubanova2019latent} & BAQD~\cite{bousseljot1995nutzung} & PTBDB~\cite{zhang2017cautionary} \\
        \midrule
         Latent ODE~\cite{rubanova2019latent} & $24.775 \pm 0.010$ & $2.789 \pm 0.011$ & $-0.818 \pm 0.009$\phantom{\footnotesize $^1$}\\
         VRNN~\cite{chung2015recurrent} & $9.113 \pm 0.018$ & $0.604 \pm 0.007$ & $\mathbf{-1.999 \pm 0.008}$\phantom{\footnotesize $^1$}\\
         CTFP~({\bf ours}) & $-16.249 \pm 0.034$ & $-2.361 \pm 0.020$ & $-1.324 \pm 0.028$\phantom{\footnotesize $^1$} \\
         Latent CTFP~({\bf ours}) & $\mathbf{-31.397 \pm 0.063}$ & $\mathbf{-6.894 \pm 0.046}$ & $\mathbf{-1.999\pm0.010}$\phantom{\footnotesize $^1$}\\
         Latent CTFP-ET~({\bf ours}) & $\mathbf{-21.934 \pm 0.029}$ & $\mathbf{-2.894 \pm 0.046}$ & $\mathbf{-1.999\pm0.010}${\footnotesize $^1$}\footnotetext{Hutchinson's trace estimator using Rademacher-distributed noise yields the same results as computation of the exact trace for one-dimensional data.}\\
        \bottomrule
    \end{tabular}}
\end{center}
\vspace{-10pt}
\end{table}
\end{savenotes}

\textbf{Results.}
The results are shown in Table~\ref{tab:read-data}.
We report the exact negative log-likelihood (NLL) per observation for CTFP and the (upper bound of) NLL estimated by the IWAE bound, using $K=125$ samples of latent variables, for latent ODE, latent CTFP, and VRNN. For each setting, the mean and standard deviation of five evaluation runs are reported. The latent CTFP models trained and evaluated with Hutchinson's trace estimator~\cite{hutchinson1990stochastic,grathwohl2018scalable} on multi-dimensional data could lead to a biased IWAE estimation and unstable training due to the variance of Hutchinson's trace estimator and the log-sum-exp operation. Therefore, we also report results trained and estimated with the exact Jacobian trace, denoted by latent CTFP-ET.
The evaluation results show that the latent CTFP model outperforms VRNN and latent ODE models on real-world datasets, indicating that CTFP is better at modeling irregular time series data with continuous dynamics.
Table~\ref{tab:read-data} also suggests that the latent CTFP model consistently outperforms the CTFP model,
demonstrating that with the latent variables, the latent CTFP model is more expressive and able to capture the data distribution better.
\ruizhi{We defer additional experimental results with different observation processes but similar conclusions to the supplementary materials.}

\section{Conclusion}

In summary, we propose the continuous-time flow process (CTFP), a reversible generative model for stochastic processes, and its latent variant. 
It maps a simple continuous-time stochastic process, i.e., the Wiener process, into a more complicated process in the observable space.
As a result, many desirable mathematical properties of the Wiener process are retained, including the efficient sampling of continuous paths, likelihood evaluation on arbitrary timestamps, and inter-/extrapolation given observed data.
Our experimental results demonstrate the superior performance of the proposed models on various datasets.

\section*{Broader Impact Statement}

Time series models could be applied to a wide range of applications, including natural language processing, recommendation systems, traffic prediction, medical data analysis, forecasting, and others.
Our research improves over the existing models on a particular type of data: irregular time series data.

There are opportunities for applications using the proposed models for beneficial purposes, such as weather forecasting, pedestrian behavior prediction for self-driving cars, and missing healthcare data interpolation or prediction. 
We encourage practitioners to understand the impacts of using CTFP in particular real-world scenarios.

One potential risk is that the capability of interpolation and extrapolation can be used in malicious ways. 
An adversary might be able to use the proposed model to infer private information given partial observations, which leads to privacy concerns. 
We would encourage further research to address this risk using tools like differential privacy.

\section*{Funding Transparency Statement}
This work was conducted at Borealis AI and partly supported by Mitacs through the Mitacs Accelerate program.

{\small
  \bibliographystyle{plainnat}
  \bibliography{example_paper}
}
\newpage
\appendix

\section{Finite-Dimensional Distribution of CTFP}
Equation~\ref{eq:model-llk-logdet} in Section~\ref{sec:model} is the log density of the distribution obtained by applying the normalizing flow models to the finite-dimensional distribution of Wiener process on a given time grid. A natural question that would arise is why the distribution described by Equation~\ref{eq:model-llk-logdet} necessarily matches the finite-dimensional distribution of $\mX_\tau = F_{\vtheta}(\mW_\tau, \tau)$. In other words, it is left to close the gap between the distributions of samples obtained by two different ways to justify Equation~\ref{eq:model-llk-logdet}: (1) first getting a sample path of $\mX_\tau$ by applying the transformation defined by $F_{\vtheta}$ to a sample of $\mW_\tau$ and then obtaining the finite-dimensional observation of $\mX_\tau$ on the time grid; (2) first obtaining the finite-dimensional sample of $\mW_\tau$ and applying the normalizing flows to this finite-dimensional distribution. 
To justify the finite-dimensional distribution of CTFP, we choose to work with the canonical Wiener space $(\Omega, \Sigma)$ equipped with the unique Wiener measure $\bm{\mu_W}$ where $\Omega=C([0,+\infty), \mathbb{R}^d)$ is the set of continuous functions from $[0, +\infty)$ to $\mathbb{R}^d$, $\Sigma$ is the Borel $\sigma$-algebra generated by all the cylinder sets of $C([0,+\infty), \mathbb{R}^d)$, and $\mW_\tau(\bm{\omega}) = \bm{\omega}(\tau)$ for $\bm{\omega} \in \Omega$. We refer the reader to Chapter 2 of \citep{le2016brownian} for more details.
Given a  time grid $0<\tau_1<\tau_2<\dots<\tau_n$, the distribution of observations of Wiener process on this discrete time grid is called the finite-dimensional distribution of $\mW_\tau$. It is a push-forward measure on $(\mathbb{R}^{d\times n}, \mathcal{B}(\mathbb{R}^{d\times n}))$ induced by the projection mapping $\pi_{\tau_1, \tau_2, \dots, \tau_n}:(\Omega, \Sigma) \rightarrow ((\mathbb{R}^{d\times n}, \mathcal{B}(\mathbb{R}^{d\times n})))$ on this  grid where $\mathcal{B}(\cdot)$ denotes the Borel $\sigma$-algebra.
Therefore, for each Borel (measurable) set $B$ of $\mathbb{R}^{d\times n}$, the finite-dimensional distribution of $B$ is $\bm{\mu}_\mW\circ\pi^{-1}(B)= \bm{\mu}_\mW(\{\bm{\omega}|(\mW_{\tau_1}(\bm{\omega})\dots\mW_{\tau_n}(\bm{\omega}))\in B \})$. We drop the subscript of $\pi$ for the simplicity of notation.
We base the justification on the following two propositions.
\begin{proposition}
\label{prop:1}
Let $F_{\vtheta}(\cdot, \cdot)$ be defined as Equations~\ref{eq:ivp} and~\ref{eq:eq9} in Section~\ref{sec:modified_anode}.
The mapping from $(\Omega, \Sigma, \bm{\mu}_\mW)$ to $(\Omega, \Sigma)$ defined by $\bm{\omega}(\tau)\rightarrow F_{\vtheta}(\bm{\omega}(\tau),\tau)$ is measurable and therefore induces a pushforward measure $\bm{\mu}_\mW\circ F_{\vtheta}^{-1}$.
\end{proposition}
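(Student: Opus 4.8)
The plan is to reduce the measurability of this path-space map to the measurability of its one-dimensional ``coordinates,'' exploiting the fact that $\Sigma$ is generated by the evaluation functionals. Write $\Phi:\Omega\to\Omega$ for the candidate map, $\Phi(\bm{\omega})(\tau)=F_{\vtheta}(\bm{\omega}(\tau),\tau)$, and let $\pi_\tau:\Omega\to\mathbb{R}^d$ denote the evaluation $\bm{\omega}\mapsto\bm{\omega}(\tau)$. Two things must be checked: (i) $\Phi$ actually maps into $\Omega$, i.e.\ $\tau\mapsto\Phi(\bm{\omega})(\tau)$ is continuous for every continuous $\bm{\omega}$; and (ii) $\Phi$ is $\Sigma/\Sigma$-measurable. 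Only then is the pushforward $\bm{\mu}_\mW\circ\Phi^{-1}$ meaningful as a measure on $(\Omega,\Sigma)$.

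For (i), I would invoke the standard theory of ordinary differential equations. Under the standing regularity assumptions on $f_{\vtheta}$ and $g_{\vtheta}$ (local Lipschitz continuity in the spatial arguments, continuity in $t$, plus a growth condition ruling out finite-time blow-up on $[t_0,t_1]$), the initial value problem of Equations~\ref{eq:ivp}--\ref{eq:eq9} has a unique solution on all of $[t_0,t_1]$, and the solution depends continuously on the initial data. Hence $(\vw,\tau)\mapsto F_{\vtheta}(\vw,\tau)=\vh_\tau(t_1)$ is (jointly) continuous; composing with the continuous map $\tau\mapsto(\bm{\omega}(\tau),\tau)$ shows $\Phi(\bm{\omega})\in C([0,+\infty),\mathbb{R}^d)=\Omega$. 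For (ii), recall that $\Sigma$ — the $\sigma$-algebra generated by the cylinder sets of $C([0,+\infty),\mathbb{R}^d)$ — is precisely the smallest $\sigma$-algebra making all evaluation maps $\pi_\tau$ measurable, and in fact is already generated by $\{\pi_\tau:\tau\in\mathbb{Q}_{\ge 0}\}$ because sample paths are continuous. Therefore a map into $(\Omega,\Sigma)$ is measurable iff each coordinate $\pi_\tau\circ\Phi$ is measurable. But $\pi_\tau\circ\Phi(\bm{\omega})=F_{\vtheta}(\pi_\tau(\bm{\omega}),\tau)$ is the composition of the measurable generator $\pi_\tau$ with the continuous — hence Borel — map $F_{\vtheta}(\cdot,\tau):\mathbb{R}^d\to\mathbb{R}^d$ from step (i). So every coordinate is measurable, $\Phi$ is measurable, and $\bm{\mu}_\mW\circ F_{\vtheta}^{-1}$ is a well-defined Borel probability measure on $(\Omega,\Sigma)$.

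The main obstacle is establishing the regularity/continuity claim in step (i) cleanly: this is where the precise hypotheses on the ANODE vector fields $f_{\vtheta},g_{\vtheta}$ must be pinned down (global existence and uniqueness on $[t_0,t_1]$, and continuous dependence on initial conditions via a Gr\"onwall-type estimate). Once that is in hand, step (ii) is a routine application of the ``generated by coordinates'' characterization of $\Sigma$; note that step (ii) strictly needs only the continuity of each $F_{\vtheta}(\cdot,\tau)$ for fixed $\tau$, whereas the joint continuity is what is required to land in $\Omega$ at all in step (i). I would also remark that the same argument applies verbatim to the latent variant $F_{\vtheta}(\cdot;\vz,\tau)$ of Equation~\ref{eq:ivp-augmented} for each fixed $\vz$, which will be convenient later.
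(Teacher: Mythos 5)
Your proof is correct and follows essentially the same route as the paper's: establish (joint) continuity of $F_{\vtheta}$ so that image paths land in $\Omega$, then reduce measurability to the generators of $\Sigma$ (you check the evaluation maps $\pi_\tau\circ\Phi$; the paper equivalently checks pre-images of cylinder sets). Your version is slightly tighter in two respects — you make explicit that the continuity of $F_{\vtheta}$ rests on existence/uniqueness and continuous dependence for the ANODE initial value problem, and you observe that continuity of $F_{\vtheta}(\cdot,\tau)$ alone suffices for Borel measurability, whereas the paper invokes the (unneeded for this step) homeomorphism property.
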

\begin{proof}
As $F_{\theta}$ is continuous in both $\bm{\omega}$ and $\tau$, it is easy to show $F_\theta(\bm{\omega}(\tau), \tau)$ is also continuous in $\tau$ for each $\omega$ continuous in $\tau$. As $F_\theta(\cdot,\tau)$ is invertible for each $\tau$, $F_\theta(\cdot, \tau)$ is an homeomorphsim between $\mathbb{R}^d$ and $\mathbb{R}^d$. Therefore, the pre-image of each Borel set of $\mathbb{R}^d$ under $F_\theta(\cdot, \tau)$ for each $\tau$ is also Borel. As a result, the pre-image of each cylinder set of $C([0,+\infty), \mathbb{R}^d)$ under the mapping defined by $F_\theta(\cdot, \cdot)$ is also a cylinder set, which is enough to show the mapping is measurable.
\end{proof}
This proposition shows $\mX_\tau$ is a stochastic process also defined in the space of continuous functions as Wiener process. It provides a solid basis to for defining finite-dimensional distribution of $\mX_\tau$ on $\mathbb{R}^{d\times n}$ in a similar ways as Wiener process using projection.
The two sampling methods mentioned above can be characterized by two different mappings from $(\Omega, \Sigma, \bm{\mu}_\mW)$ to $(\mathbb{R}^{d\times n}, \mathcal{B}(\mathbb{R}^{d\times n}))$: (1) applying transformation defined by $F_\theta$ to a function in $C([0, +\infty), \mathbb{R}^d)$ and then applying the projection $\pi$ to the transformed function given a time grid; (2) applying the projection to a continuous function on a time grid and applying the transformation defined by $F_\theta(\cdot,\tau)$ for each $\tau$ individually.
We can check the pushforward measures induced by the two mappings agree on every Borel set of $\mathbb{R}^{d\times n}$ as their pre-images are the same in $(\Omega, \Sigma, \bm{\mu}_\mW)$.
Therefore we have the following proposition:
\begin{proposition}
Given a finite subset $\{\tau_1, \tau_2, ..., \tau_n\}\subset(0, +\infty)$, 
the finite-dimensional distribution of $\mX_\tau$ is the same as the distribution of $(F_\theta(\mW_{\tau_1}, \tau_1), ..., F_\theta(\mW_{\tau_n}, \tau_n))$, where $(\mW_{\tau_1}, ..., \mW_{\tau_n})$ is a $n\times d$-dimensional random variable with finite-dimensional distribution of $\mW_\tau$.
\end{proposition}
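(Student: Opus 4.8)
The plan is to reduce the statement entirely to the commutativity-of-mappings observation that has already been set up in the text, so that the proof of this proposition is essentially a bookkeeping exercise using Proposition~\ref{prop:1}. First I would fix the finite time grid $0<\tau_1<\dots<\tau_n$ and name the two mappings from $(\Omega,\Sigma,\bm{\mu}_\mW)$ to $(\mathbb{R}^{d\times n},\mathcal{B}(\mathbb{R}^{d\times n}))$ that encode the two sampling procedures. The first map is $\Phi_1 := \pi \circ \mathcal{F}$, where $\mathcal{F}:\Omega\to\Omega$ is the path transformation $\bm{\omega}(\cdot)\mapsto F_{\vtheta}(\bm{\omega}(\cdot),\cdot)$ from Proposition~\ref{prop:1} and $\pi$ is the projection onto the grid. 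The second map is $\Phi_2 := G \circ \pi$, where $G:\mathbb{R}^{d\times n}\to\mathbb{R}^{d\times n}$ acts coordinate-wise as $(\vw_{\tau_1},\dots,\vw_{\tau_n})\mapsto (F_{\vtheta}(\vw_{\tau_1},\tau_1),\dots,F_{\vtheta}(\vw_{\tau_n},\tau_n))$. Both are measurable: $\mathcal{F}$ by Proposition~\ref{prop:1}, $\pi$ by definition of the cylinder $\sigma$-algebra, and $G$ because each $F_{\vtheta}(\cdot,\tau_i)$ is continuous hence Borel measurable.

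Next I would observe that $\Phi_1 = \Phi_2$ as functions on $\Omega$, pointwise: for any $\bm{\omega}\in\Omega$, the $i$-th block of $\Phi_1(\bm{\omega})$ is $(\mathcal{F}\bm{\omega})(\tau_i) = F_{\vtheta}(\bm{\omega}(\tau_i),\tau_i)$, which is exactly the $i$-th block of $\Phi_2(\bm{\omega}) = G(\bm{\omega}(\tau_1),\dots,\bm{\omega}(\tau_n))$. This is immediate from the definitions — the only content is that ``transform the path then restrict to the grid'' and ``restrict to the grid then transform each coordinate'' manifestly produce the same $n$-tuple. Since $\Phi_1=\Phi_2$, their pushforwards of $\bm{\mu}_\mW$ coincide on every Borel set $B\subseteq\mathbb{R}^{d\times n}$, i.e. $\bm{\mu}_\mW\circ\Phi_1^{-1} = \bm{\mu}_\mW\circ\Phi_2^{-1}$.

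Finally I would identify each side with the object named in the proposition. By the definition of finite-dimensional distributions applied to the process $\mX_\tau = F_{\vtheta}(\mW_\tau,\tau)$ (which is a legitimate $\Omega$-valued process by Proposition~\ref{prop:1}), the law of $(\mX_{\tau_1},\dots,\mX_{\tau_n})$ is precisely $\bm{\mu}_\mW\circ\Phi_1^{-1} = (\bm{\mu}_\mW\circ\mathcal{F}^{-1})\circ\pi^{-1}$, the projection of the pushforward measure from Proposition~\ref{prop:1}. On the other hand, $\bm{\mu}_\mW\circ\Phi_2^{-1} = (\bm{\mu}_\mW\circ\pi^{-1})\circ G^{-1}$ is, by definition, the law of $G$ applied to a random vector whose distribution is the finite-dimensional distribution of $\mW_\tau$ on the grid — that is, the law of $(F_{\vtheta}(\mW_{\tau_1},\tau_1),\dots,F_{\vtheta}(\mW_{\tau_n},\tau_n))$. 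Equating the two completes the proof.

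I do not expect a serious obstacle here; the real work was already done in Proposition~\ref{prop:1} (measurability of the path map and the fact that $\mX_\tau$ lives on continuous-path space). The only point requiring a moment of care is making the identification ``$\bm{\mu}_\mW\circ\Phi_1^{-1}$ equals the finite-dimensional distribution of $\mX_\tau$'' fully rigorous: one should note that $\pi_{\tau_1,\dots,\tau_n}$ is defined on $(\Omega,\Sigma)$ and that $\mathcal{F}$ maps into $(\Omega,\Sigma)$ measurably, so the composition $\pi\circ\mathcal{F}$ is well-defined and measurable, and that by construction $\mX_{\tau_i} = \mW_{\tau_i}\circ\mathcal{F}$ — i.e. the coordinate process of the transformed measure is exactly $\mX$. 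With that sentence in place the chain of equalities is routine.
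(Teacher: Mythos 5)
Your proposal is correct and follows essentially the same route as the paper: both arguments reduce to the observation that the two mappings $\pi\circ\mathcal{F}$ and $G\circ\pi$ agree pointwise on $\Omega$, so their pushforwards of $\bm{\mu}_\mW$ coincide on every Borel set of $\mathbb{R}^{d\times n}$. Your write-up merely spells out the measurability bookkeeping and the identification of the two pushforwards more explicitly than the paper's one-line proof.
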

\begin{proof}
It suffices to check that given the fixed time grid, for each Borel set $B\subset\mathbb{R}^{d\times n}$, the preimage of $B$ is the same under the two mappings. They are both $\{\bm{\omega}|(F_\theta(\mW_{\tau_1}(\bm{\omega}), \tau_1), F_\theta(\mW_{\tau_2}(\bm{\omega}), \tau_2),\dots, F_\theta(\mW_{\tau_n}(\bm{\omega}), \tau_n))\in B \}$.
\end{proof}
\section{Experiment Setup and Model Architecture Details}
We describe the details on synthetic dataset generation, real-world dataset pre-processing, model architecture as well as training and evaluation settings in this section.
\subsection{Synthetic Dataset Details}
For the geometric Brownian motion (GBM), we sample 10000 trajectories from a GBM with the parameters of $\mu=0.2$ and a variance of $\sigma=0.5$ in the interval of [0, 30]. The timestamps of the observations are sampled from a homogeneous Poisson point process with an intensity of $\lambda_\mathrm{train}=2$. We evaluate the model on the observations timestamps sampled from two homogeneous Poisson processes separately with intensity values of $\lambda_\mathrm{test}=2$ and $\lambda_\mathrm{test}=20$.

For the Ornstein–Uhlenbeck (OU) process, the parameters of the process we sample trajectories from are $\theta=2, \mu=1$, and $\sigma=10$. We also sample 10000 trajectories and use the same set of observation intensity values, $\lambda_\mathrm{train}$ and $\lambda_\mathrm{test}$, to sample observation timestamps from homogeneous Poisson processes for training and test.

For the mixture of OU processes (MOU), we sample 5000 sequences from each of two different OU processes and mix them to obtain 10000 sequences. One OU process has the parameters of $\theta=2, \mu=1$, and $\sigma=10$ and the observation timestamps are sampled from a homogeneous Poisson process with $\lambda_\mathrm{train}=2$. The other OU process has the parameters of $\theta=1.0, \mu=2.0$, and $\sigma=5.0$ with observation timestamps sampled with $\lambda_\mathrm{train}=20$.

For the 10000 trajectories of each dataset, we use 7000 trajectories for training and 1000 trajectories for validation. We test the model on 2000 trajectories for each value of $\lambda_\mathrm{test}$. To test the model with $\lambda_\mathrm{test}=20$ on GBM and OU process, we also use 2000 sequences.

\subsection{Real-World Dataset Details}
As mentioned in Section~\ref{exp:real} of the paper, we compare our models against the baselines on three datasets: Mujoco-Hopper, Beijing Air-Quality dataset (BAQD), and PTB Diagnostic Database(PTBDB). 
The three datasets can be downloaded using the following links:
\begin{itemize}
    \item \url{http://www.cs.toronto.edu/~rtqichen/datasets/HopperPhysics/training.pt}
    \item \url{https://www.kaggle.com/shayanfazeli/heartbeat/download}
    \item \url{https://archive.ics.uci.edu/ml/datasets/Beijing+Multi-Site+Air-Quality+Data}
\end{itemize}
We pad all sequences into the same length for each dataset.
The sequence length of the Mujoco-Hopper dataset is 200 and the sequence length of BAQD is 168. The maximum sequence length in the PTBDB dataset is 650.
We rescale the indices of sequences to real numbers in the interval of [0, 120] and take the rescaled values as observation timestamps for all datasets.
To make the sequences asynchronous or irregularly-sampled, we sample observation timestamps $\{\tau_i\}_{i=1}^n$ from a homogeneous Poisson process with an intensity of 2 that is independent of the data. 
For each sampled timestamp, the value of the closest observation is taken as its corresponding value.
The timestamps of all sampled sequences are shifted by a value of 0.2 since $\mW_0=0$ deterministically for the Wiener process and there's no variance for the CTFP model's prediction at $\tau = 0$.

\subsection{Model Architecture Details}
\label{sec:model_arch}
To ensure a fair comparison, we use the same values for hyper-parameters including the latent variable and hidden state dimensions across all models. Likewise, we keep the underlying architectures as similar as possible and use the same experimental protocol across all models.

For CTFP and Latent CTFP, we use a one-block augmented neural ODE module that maps the base process to the observation process. For the augmented neural ODE model, we use an MLP model consisting of 4 hidden layers of size 32--64--64--32 for the model in Equation~\ref{eq:ivp} and Equation~\ref{eq:ivp-augmented}. \ruizhi{In practice, the implementation of $g$ in the two equations is optional and its representation power can be fully incorporated into $f$.} This architecture is used for both synthetic and real-world datasets.
For the latent CTFP and latent ODE models appearing in Section~\ref{sec:exp}, we use the ODE-RNN model as the recognition network. For synthetic datasets, the ODE-RNN model consists of a one-layer GRU cell with a hidden dimension of 20 (the rec-dims parameter in its original implementation) and a one-block neural ODE module that has a single hidden layer of size 100, and it outputs a 10-dimensional latent variable. 
The same architecture is used by both latent ODE and latent CTFP models. 
For real-world datasets, the ODE-RNN architecture uses a hidden state of dimension 20 in the GRU cell and an MLP with a 128-dimensional hidden layer in the neural ODE module. The ODE-RNN model produces a 64-dimensional latent variable.
For the generation network of the latent ODE (V2) model, we use an ODE function with one hidden layer of size 100 for synthetic datasets and 128 for real-world datasets.
The decoder network has 4 hidden layers of size 32--64--64--32; it maps a latent trajectory to outputs of Gaussian distributions at different time steps.

The VRNN model is implemented using a GRU network. The hidden state of the VRNN models is 20-dimensional for synthetic and real-world datasets. 
The dimension of the latent variable is 64 for real-word datasets and 10 for synthetic datasets. 
We use an MLP of 4 hidden layers of size 32--64--64--32 for the decoder network, an MLP with one hidden layer that has the same dimension as the hidden state for the prior proposal network, and an MLP with two hidden layers for the posterior proposal network.
For synthetic data sampled from Geometric Brownian Motion, we apply an exponential function to the samples of all models. Therefore the distribution precited by latent ODE and VRNN at each timestamp is a log-normal distribution.

\subsection{Training and Evaluation Settings}
For synthetic data, we train all models using the IWAE bound with 3 samples and a flat learning rate of $5\times 10^{-4}$ for all models. We also consider models trained with or without the aggressive training scheme proposed by~\citet{he2019lagging} for latent ODE and latent CTFP. We choose the best-performing model among the ones trained with or without the aggressive scheme based IWAE bound, estimated with 25 samples on the validation set for evaluation. The batch size is 100 for CTFP models and 25 for all the other models. For experiments on real-world datasets, we did a hyper-parameter search on learning rates over two values of $5\times 10^{-4}$ and $10^{-4}$, and whether using the aggressive training schemes for latent CTFP and latent ODE models. We report the evaluation results of the best-performing model based on IWAE bound estimated with 125 samples. 

\section{Ablation Study Results}
\subsection{Additional Experiment Results on Real-world Datasets}
We provide additional experiment results on real-world datasets using different intensity value $\lambda$s of 1 and 5 to sample observation processes in Table 1 below.
\begin{table}[h]
    \caption{\small {Ablation Study on Time Interval for Real-World Data}}
    \centering
    \tiny
    {
    \begin{tabular}{l@{\extracolsep{0pt}}r@{\extracolsep{1pt}}r@{\extracolsep{1pt}}r@{\extracolsep{1pt}}r@{\extracolsep{1pt}}r@{\extracolsep{1pt}}r@{\extracolsep{1pt}}r@{\extracolsep{1pt}}r@{}}
        \toprule
         \multirow{3}{*}{Model}
         &\multicolumn{8}{c}{Negative Log-Likelihood}\\
         \cmidrule{2-9}
         &\multicolumn{2}{c}{Mujoco-Hopper}&&\multicolumn{2}{c}{BAQD}&&\multicolumn{2}{c}{PTBDB}\\
         \cmidrule{2-3}
         \cmidrule{5-6}
         \cmidrule{8-9}
         &$\lambda_\mathrm{test} = 1$ & $\lambda_\mathrm{test} = 5$ && $\lambda_\mathrm{test} =1$ & $\lambda_\mathrm{test} =5$ && $\lambda_\mathrm{test} =1$ & $\lambda_\mathrm{test} =5$ \\
        \midrule
         Latent ODE & $25.082\pm0.011$ & $24.599\pm0.004$ && $2.948\pm0.006$ & $2.686\pm0.006$ && $-0.633\pm0.006$ & $-0.892\pm0.009$ \\
         VRNN & $10.553\pm0.010$ & $8.543\pm0.008$ && $0.044\pm0.007$ & $-1.016\pm0.001$ && $\mathbf{-1.552\pm0.011}$ & $\mathbf{-2.545\pm0.005}$\\
         CTFP  & $-10.152\pm0.084$ & $-23.241\pm0.057$ && $-1.255\pm0.022$ & $-3.784\pm0.035$ && $-1.028\pm0.028$ & $-1.824\pm0.014$ \\
         Latent CTFP & $\mathbf{-30.469\pm0.079}$ & $\mathbf{-33.412\pm0.035}$ && $\mathbf{-7.276\pm0.061}$ &  $\mathbf{-6.226\pm0.016}$ && $\mathbf{-1.552\pm0.010}$ & $\mathbf{-2.533\pm0.008}$ \\
        \bottomrule
    \end{tabular}
    }
    \vspace{-8pt}
\end{table}

\subsection{I.I.D. Gaussian as Base Process}
In this experiment, we replace the base Wiener process with I.I.D Gaussian random variables and keep the other components of the models unchanged. This model and its latent variant are named CTFP-IID-Gaussian and latent CTFP-IID-Gaussian. As a result, the trajectories sampled from CTFP-IID-Gaussian are not continuous and we use this experiment to study the continuous property of models and its impact on modeling irregular time series data with continuous dynamics.
The results are presented in Table~\ref{tab:iid-gaussian-synthetic} and Table~\ref{tab:iid-gaussian-real}.
\begin{table}
    \caption{Comparison between CTFP, CTFP-IID-Gaussian, latent CTFP, and latent CTFP-IID-Gaussian on synthetic datasets. We report NLL per observation.}   
    \label{tab:iid-gaussian-synthetic}
    \centering
    \small
    \scalebox{1.0}{
    \begin{tabular}{l@{\extracolsep{3pt}}rrrrr@{}}
        \toprule
         \multirow{2}{*}{Model}
         &\multicolumn{2}{c}{GBM}&\multicolumn{2}{c}{OU}&\multicolumn{1}{c}{M-OU}\\
         \cmidrule{2-3}
         \cmidrule{4-5}
         \cmidrule{6-6}
         &$\lambda_\mathrm{test} = 2$ & $\lambda_\mathrm{test} = 20$ & $\lambda_\mathrm{test} =2$ & $\lambda_\mathrm{test} =20$ & $\lambda_\mathrm{test} = (2, 20)$ \\
        \midrule
         Latent ODE~\cite{rubanova2019latent} & 3.826 & 5.935 & 3.066 & 3.027 & 2.690 \\
         CTFP-IID-Gaussian & 4.952 & 4.094 & 3.025 & 3.024 & 2.716 \\
         Latent CTFP-IID-Gaussian & 3.945 & 5.072 & 3.017 & 3.000 & 2.689\\
         CTFP ({\bf ours}) & \textbf{3.107} & \textbf{1.929} & 2.902 & 1.941 & 1.408 \\
         Latent CTFP ({\bf ours}) & \textbf{3.107} & 1.930 & 2.902 & \textbf{1.939} & \textbf{1.392} \\
         \midrule
         Ground Truth & 3.106 & 1.928 & 2.722 & 1.888 & 1.379 \\
        \bottomrule
    \end{tabular}
    }
\end{table}

\begin{table}
    \vspace{-10pt}
    \caption{Comparison Between CTFP, CTFP-IID-Gaussian, latent CTFP, and latent CTFP-IID-Gaussian on real-world datasets. We report NLL per observation.}
    \label{tab:iid-gaussian-real} 
    \centering
    \small
    \scalebox{1.0}{\begin{tabular}{lrrr}
        \toprule
         Model & Mujoco-Hopper~\cite{rubanova2019latent} & BAQD~\cite{bousseljot1995nutzung} & PTBDB~\cite{zhang2017cautionary} \\
        \midrule
         Latent ODE~\cite{rubanova2019latent} & $24.775 \pm 0.010$ & $2.789 \pm 0.011$ & $-0.818 \pm 0.009$\\
         CTFP-IID-Gaussian & $22.023\pm0.010$ & $3.398\pm0.006$ & $-0.375\pm0.003$\\
         Latent CTFP-IID-Gaussian & $17.397\pm0.007$ & $1.471\pm0.005$ & $-1.436\pm0.005$\\
         CTFP~({\bf ours}) & $-16.249 \pm 0.034$ & $-2.361 \pm 0.020$ & $-1.324 \pm 0.028$ \\
         Latent CTFP~({\bf ours}) & $\mathbf{-31.397 \pm 0.063}$ & $\mathbf{-6.894 \pm 0.046}$ & $\mathbf{-1.999\pm0.010}$\\
        \bottomrule
    \end{tabular}}
    \vspace{-10pt}
\end{table}

The results show that CTFP consistently outperforms CTFP-IID-Gaussian, and latent CTFP outperforms latent CTFP-IID-Gaussian. The results corroborate our hypothesis that the superior performance of CTFP models can be partially attributed to the continuous property of the model. Moreover, latent CTFP-IID-Gaussian shows similar but slightly better performance than latent ODE models. The results comply with our hypothesis as the models are very similar and both models have no notion of continuity in the decoder. We believe the performance gain of latent CTFP-IID-Gaussian comes from the use of (dynamic) normalizing flow which is more flexible than Gaussian distributions used by latent ODE.

\subsection{CTFP-RealNVP}
In this experiment, we replace the continuous normalizing flow in CTFP model with another popular choice of normalizing flow model, RealNVP~\cite{dinh2017density}. This is variant of CTFP is named CTFP-RealNVP and its latent version is called latent CTFP-RealNVP.
Note that the trajectories sampled from CTFP-RealNVP model are still continuous. 
We evaluate CTFP-RealNVP and latent CTFP-RealNVP models on datasets with high dimensional data, Mujoco-Hopper, and BAQD. The results are shown in Table~\ref{tab:realnvp}.

\begin{table}[h]
    \centering
    \caption{Comparison between CTFP, CTFP-RealNVP, and their latent variants on Mujoco-Hopper and BAQD datasets. We report NLL per observation.}
    \label{tab:realnvp}
    \begin{tabular}{lrr}
        \toprule
         Model&Mujoco&BAQD \\
        \midrule
         CTFP-RealNVP & $-23.061\pm0.000$&$-5.099\pm0.002$\\
         Latent CTFP-RealNVP & $-23.602\pm0.001$&$-5.109\pm0.005$\\
         CTFP & $-16.249\pm0.034$&$-2.361\pm0.020$\\
         Latent CTFP & $\mathbf{-31.397\pm0.063}$&$\mathbf{-6.894\pm0.046}$\\
        \bottomrule
    \end{tabular}
\end{table}

The table indicates that CTFP-RealNVP outperforms CTFP. However, when incorporating the latent variable, the latent CTFP-RealNVP performs significantly worse than latent CTFP. The worse performance might be because RealNVP cannot make full use of the information in the latent variable due to its structural constraints as we discussed in Section~\ref{sec:modified_anode}.

\section{Additional Details for Latent ODE Models on Mujoco-Hooper Data}

The original latent ODE paper focuses on point estimation and uses the mean squared error as the performance metric~\citep{rubanova2019latent}.
When applied to our problem setting and evaluated using the log-likelihood, the model performs unsatisfactorily.
In Table~\ref{tab:mujoco}, the first row shows the negative log-likelihood on the Mujoco-Hopper dataset.
The inferior NLL of the original latent ODE is potentially caused by the use a fixed output variance of $10^{-6}$, which magnifies even a small reconstruction error.

To mitigate this issue, we propose two modified versions of the latent ODE model.
For the first version (V1), 
given a pretrained (original) latent ODE model, we do a logarithmic scale search for the output variance and find the value that gives the best performance on the validation set.
The second version (V2) uses an MLP to predict the output mean and variance.
Both modified versions have much better performance than the original model, as shown in Table~\ref{tab:mujoco}, rows~2--3.
It also shows that the second version of the latent ODE model (V2) outperforms the first one (V1) on the Mujoco-Hopper dataset. 
Therefore, we use the second version (V2) for all the experiments in the main text.

\begin{table}[h]
    \caption{Comparison of different version of latent ODE models on Mujoco-Hooper Datasets.}
    \centering
    \begin{tabular}{lr}
        \toprule
         Model&NLL \\
        \midrule
         Latent ODE (original) &$4 \times 10^7 \pm 9 \times 10^5$ \\
         Latent ODE (V1) & $45.874\pm0.001$ \\
         Latent ODE (V2) & $24.775 \pm 0.010$\\
         VRNN & $9.113 \pm 0.018$\\
         CTFP & $-16.249 \pm 0.034$\\
         Latent CTFP & $\mathbf{-31.397 \pm 0.063}$ \\
        \bottomrule
    \end{tabular}
    \label{tab:mujoco}
\end{table}

\section{Qualitative Sample for VRNN Model}

We sample trajectories from the VRNN model~\cite{chung2015recurrent} trained on Geometric Brownian Motion (GBM) by running the model on a dense time grid and show the trajectories in Figure~\ref{figA:VRNN}.
We compare the trajectories sampled from the model with trajectories sampled from GBM.
As we can see, the sampled trajectories from VRNN are not continuous in time.

\begin{figure*}[h]
    \centering
    \begin{subfigure}[b]{0.45\textwidth}
        \centering
        \includegraphics[width=\textwidth]{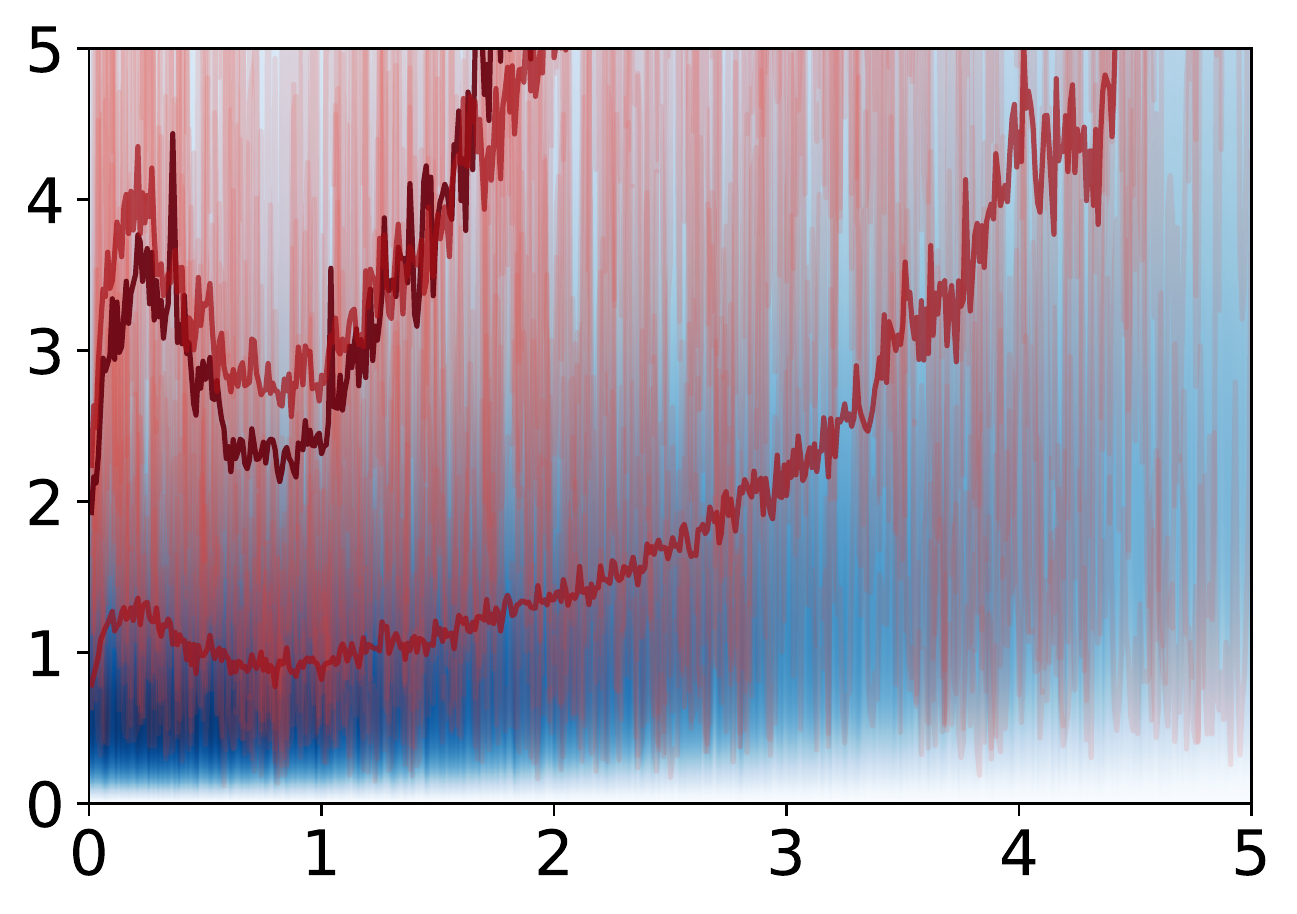}
        \caption{VRNN}
        \label{fig:figA_vrnn}
    \end{subfigure}
    \quad
    \begin{subfigure}[b]{0.45\textwidth}
        \centering
        \includegraphics[width=\textwidth]{fig/sample_generation_gt_quantile_large_tick.pdf}
        \caption{Ground Truth}
        \label{fig:figA_latentODE}
    \end{subfigure}
    \caption{Sample trajectories and marginal density estimation by VRNN (a). We compare the results with sample trajectories and marginal density with ground truth (b). In addition to the sample trajectories (red) and the marginal density (blue), we also show the sample-based estimates (closed-form for ground truth) of the inter-quartile range (dark red) and mean (brown) of the marginal density.}
    \label{figA:VRNN}
\end{figure*}

We also use VRNN to estimate the marginal density of $\mX_{\tau}$ for each $\tau\in(0, 5]$ and show the results in Figure~\ref{figA:VRNN}. 
It is not straightforward to use VRNN model for marginal density estimation.
For each timestamp $\tau\in(0, 5]$, we get the marginal density of $\mX_{\tau}$ by running VRNN on a time grid with two timestamps, 0 and $\tau$: at the first step, the input to VRNN model is $\vx_0 = 1$ and we can get prior distributions of the latent variable $\mZ_\tau$.
Note that a sampled trajectory from GBM is always 1 when $\tau=0$.
Conditioned on the sampled latent codes $\vz_0$ and $\vz_\tau$, VRNN proposes $p(\vx_\tau|\vx_0, \vz_\tau, \vz_0)$ at the second step. We average the conditional density over 125 samples of $\mZ_\tau$ and $\mZ_0$ to estimate the marginal density. 

The marginal density estimated using a time grid with two timestamps is not consistent with the trajectories sampled on a different dense time grid. The results indicate that the choice of time grid has a great impact on the distribution modeled by VRNN and the distributions modeled by VRNN on different time grids can be inconsistent. In contrast, our proposed CTFP models do not have such problems.

\end{document}